\newcommand{\defeq}{\vcentcolon=}
\newtheorem{thm}{Theorem}
\newtheorem{lemma}{Lemma}
\newtheorem{cor}{Corollary}
\title{The Flag Manifold as a Tool for Analyzing and Comparing Data Sets}
\author{
Xiaofeng Ma\\
  Department of Mathematics\\
  Colorado State University\\
  Fort Collins CO 80523, USA \\
  \texttt{xiaofeng.ma@rams.colostate.edu} \\
   \And
Michael Kirby \\
  Department of Mathematics\\
  Colorado State University\\
  Fort Collins CO 80523, USA \\
  \texttt{michael.kirby@colostate.edu} \\
  \And
 Chris Peterson \\
  Department of Mathematics\\
  Colorado State University\\
  Fort Collins CO 80523, USA \\
  \texttt{christopher2.peterson@colostate.edu} \\
}
\begin{document}


\maketitle

\begin{abstract}
The shape and orientation of data clouds reflect
variability in observations that can confound pattern recognition systems. 
Subspace methods, utilizing Grassmann manifolds, have been a great aid in dealing with such variability. However, this usefulness begins to falter when the data cloud contains sufficiently many outliers corresponding to stray elements from another class or when the number of data points is larger than the number of features.
We illustrate how nested subspace methods, utilizing flag manifolds, can help to deal with such additional confounding factors. Flag manifolds, which are parameter spaces for nested subspaces, are a natural geometric generalization of Grassmann manifolds. To make practical comparisons on a flag manifold,
algorithms are proposed for determining the distances between points $[A], [B]$ on a flag manifold, where $A$ and $B$ are arbitrary orthogonal matrix representatives for $[A]$ and $[B]$, and for determining the initial direction of these minimal length geodesics.  The approach is illustrated in the
context of (hyper) spectral 
imagery showing the impact of ambient
dimension, sample dimension, and flag structure.
\end{abstract}


\section{Introduction}
Variability in data observations due, for example, to  image lighting,  data noise, or batch effects, contributes to the challenge of pattern recognition.
One way to approach modeling this variation
is to observe the sample over
its variation in state.   This motivates 
the robust modeling of a set
of data, i.e., modeling specifically to  capture the variability of different realizations
of a data class.  
Practically, one can often exploit this variability by considering a collection of observations abstractly as a single point in an appropriate parameter space and
algorithmically exploiting the geometry of the parameter space.

Ideas from geometry and topology have shown considerable promise for the analysis of large, and or complex, data sets given their ability to encode this variability.  For example, the mathematical framework of 
the Grassmannian has proven to be effective 
at capturing many of the pattern variations that so often
confound pattern recognition systems.  In this setting 
data is encoded as subspaces
and distances are measured
using angles between subspaces.
The Grassmann manifold is often a suitable tool for analyzing data sets 
where the number of feature dimensions in the ambient space is less than half of the ambient dimension.

Initially explored in the setting of
subspace packing problems \cite{strohmer2003,conway1996,kutyniok2009robust}, the application of Stiefel and Grassmann manifolds 
has become widespread in
computer vision and pattern recognition.
Examples include:  video processing
\cite{he2012incremental},
classification, 
\cite{harandi2011graph,chang2008classification,wang2009kernel,wang2013grassmannian}, action recognition
 \cite{azary2013}, expression analysis
 \cite{taheri2011towards,turaga2009locally,liu2013partial}, domain
 adaptation
 \cite{kumar2016robust,patel2015visual},
 regression \cite{shaw2013regression,hong2014geodesic},
 pattern recognition \cite{ma2011recognition},
and computation of subspace means
\cite{chakraborty2015recursive,marrinan2015flag}.  
More recently, Grassmannians have also been explored in the deep neural
network literature \cite{huang2018building}.
Much of this progress has hinged on the development of efficient
algorithms \cite{edelman1998geometry,gallivan2003efficient,absil} allowing procedures developed in other settings to be transported to  analogous procedures on Grassmann manifolds.
A collection of papers by Nishimori et al introduced flag manifolds
in the context of independent component analysis and optimization \cite{nishimori2006riemannian,nishimori2006riemannian2, nishimori2007flag,nishimori2008natural}. Later work by others used and extended some of these ideas in a variety of contexts \cite{fiori2011extended, draper2014flag, marrinan2015flag, marrinan2016flag, ma2019self}.
Very recent work of Ye, Wong, and Lim gives an expanded view of the local differential geometry of flag manifolds with a very practical viewpoint \cite{2019arXiv190700949Y}. Two features that we were unable to find in the above cited papers, and that were needed in order to develop a particular class of procedures, are
 algorithms for determining the distances between points $[A], [B]$ on a flag manifold where $A$ and $B$ are arbitrary orthogonal matrix representatives for $[A]$ and $[B]$ and algorithms for determining how to move from $[A]$ to $[B]$ along a {\it minimal} length geodesic. In this paper we develop such algorithms and illustrate their use in several sample problems in data analysis.

From the data analysis perspective, points on a Grassmann manifold $Gr(k,n)$ parameterize the $k$-dimensional linear subspaces of $\mathbb R^n$. Points on a flag manifold $FL(n_1,n_2, \dots, n_d)$ parameterize sequences of nested linear subspaces ${0}=V_0\subset V_1\subset V_2\subset \dots \subset V_d=\mathbb R^n$ with $n_i=dim (V_{i})-dim (V_{i-1})$. Flag manifolds can be viewed as generalizations or refinements of Grassmannians and have the ability to encode more subtle relationships  than are capable with Grassmannians.   
In practice, the Grassmannian seems to be well suited
for data sets where the ambient dimension is much
larger than the number of data points (tall matrices) and where the data set is relatively pure. While applicable in this setting, the 
flag manifold approach
is also suitable to the analysis
of some data sets where the data dimension may be small
relative to the number of observations (wide matrices) and where the data set may consist of a mixture of classes.

As described above, flag manifolds constitute a refinement of Grassmann manifolds that enable the  measurement of the distance between nested spaces. They are particularly effective for studying the challenging problem of comparing mixed data sets. An example of what is meant by this is the following: suppose that one data set has 80 percent of its samples drawn from class A and 20 percent from class B and a second data set has the reverse mixture. Grassmann methods have difficulties distinguishing between such data sets whereas flag methods appear to be more robust with respect to distinguishing between these data sets.

Mathematically, as is demonstrated in this paper, 
the tools for measuring
geodesic distances between data represented by tall versus wide matrices are utilized in a different manner.  
Here we propose
practical algorithms for computing 
distances between wide matrices that may 
be useful for solving pattern recognition and computer vision problems.
The work is in the same spirit as Grassmannian
data processing but extends these tools 
to a distinct yet important application.
We argue that in many cases where data is subject to wide variability, the distances measured
between large sets of small feature spaces captures
more fidelity than algorithms on Euclidean space.

The outline of this paper is as follows:  In Section \ref{GRtext} we
review the geometric framework
of the Grassmannian.
In Section \ref{FLtext} the theory of the
flag manifold is developed along with efficient algorithms to compute geodesic
distances.  In Section \ref{results}
we illustrate the applicability 
of the method on hyperspectral 
imagery.  In Section \ref{conc}
we summarize the features of the 
methodology.


\section{The Grassmannian}
\label{GRtext}
The Grassmannian, denoted by $Gr(k,n)$, is a geometric object whose points parameterize the $k$-dimensional subspaces of a fixed $n$-dimensional vector space. In the context of applications, the fixed $n$-dimensional vector space is typically taken  to be $\mathbb{R}^n$ or $\mathbb{C}^n$ (though vector spaces over other fields can also be considered).
For the purposes of this paper, the ambient vector space is taken to be $\mathbb{R}^n$ and we can represent $Gr(k,n)$ as a {\it real matrix manifold}. Each point in $Gr(k,n)$ is identified with an equivalence class of orthogonal matrices leading to the representation of $Gr(k,n)$ as $O(n)/O(k)\times O(n-k)$ or alternatively in terms of special orthogonal matrices as $SO(n)/S(O(k)\times O(n-k))$. In these formulas, $O(n)$ denotes the group of $n\times n$ orthogonal matrices and $O(k)\times O(n-k)$ denotes the subgroup of $O(n)$ consisting of block diagonal matrices with elements from $O(k)$ in the first block and elements from $O(n-k)$ in the second block. The notation $SO(n)$ (resp. $S(O(k)\times O(n-k))$) denotes the subgroup of $O(n)$ (resp. $O(k)\times O(n-k)$) with determinant $1$. Thus a point on $Gr(k,n)$ can be identified with an equivalence classes of $n$-by-$n$ special orthogonal matrices $[Y] \subset SO(n)$ where two elements $Y,Y' \in SO(n)$ are in the same equivalence class, written $Y \sim Y'$, if there exists an $M$ such that $Y' = YM$ where
\begin{equation}
M = \left[\begin{array}{cc}
        M_k&0\\
        0&M_{n-k} 
        \end{array}\right ]  
\end{equation}
such that $M_k \in O(k)$, $M_{n-k} \in O(n-k)$ and these matrices satisfy $\det (M_k) \cdot \det (M_{n-k})=1$. If $Y \sim Y'$ then $[Y]=[Y']$ denote the same point on the Grassmann manifold $Gr(k,n)$.
One advantage of this characterization of $Gr(k,n)$ is that we can utilize the well-studied geometry of $SO(n)$. It is well known that a geodesic path on $SO(n)$, starting at a point $Q\in SO(n)$, is given by a one parameter exponential flow $:t \mapsto Q\exp(tH)$ where $H$ is an $n$-by-$n$ skew-symmetric matrix. Since $Gr(k,n)$ is a quotient manifold of $SO(n)$ by the subgroup $S(O(k)\times O(n-k))$, it can be readily verified that when representing geodesics on $Gr(k,n)$, one can further restrict $H$ to be a skew symmetric matrix of the form
\begin{equation}
H = \left [\begin{array}{cc} 0_k&-B^T\\B & 0_{n-k} \end{array} \right ], B \in \mathbb{R}^{(n-k) \times k}
\end{equation}
where the size and location of the zero-blocks mirror the size and location of $M_k,M_{n-k}$ in the block diagonal matrix $M$. A geodesic on $Gr(k,n)$, starting at the point $[Q]\in Gr(k,n)$, can thus be expressed in parameterized form as: 
\begin{equation}
    Q(t) = Q \exp(t \left [ \begin{array}{cc} 0&-B^T\\B & 0  \end{array}\right ]).
\end{equation}
The sub-matrix $B$ specifies the direction and the speed of the geodesic path. More details can be found in~\cite{edelman1998geometry}. As will be seen later in Section~\ref{sec:flag_matrix_representation}, an advantage of the characterization of the Grassmannian as an equivalence class of special orthogonal matrices is that this approach allows a straightforward generalization for defining and representing points and geodesics on a {\it flag manifold} thanks to the underlying Lie theory. 

Computations of distances between points on the Grassmannian $Gr(k,n)$ are often performed using an $n$-by-$k$ orthonormal matrix representative (whose column space corresponds to the point on $Gr(k,n)$). In this setting, a point on $Gr(k,n)$ can be represented as an equivalence class of $n$-by-$k$  orthonormal matrices where $X\sim X'$ iff $X' = XU$ where $U\in O(k)$. The distance between two points on $Gr(k,n)$ (i.e. two $k$-dimensional subspaces of $\mathbb{R}^n$) $[X]$ and $[Y]$ can be computed via the compact SVD of $X^TY$, i.e.,
\begin{equation}\label{eq:grassmann_distance}
U\Sigma V^T \defeq X^TY.
\end{equation}
From the SVD, the geodesic distance between $[X]$ and $[Y]$ is defined as:
\begin{equation}\label{eq:gr_distance}
d_g([X],[Y]) = \sqrt{\sum_{j=1}^k \lambda_j^2}
\end{equation}
where $\lambda_j = \arccos(\sigma_j)$ with $\sigma_j$ denoting the $j^{th}$ diagonal element of $\Sigma$. In the formula $(XU)^TYV = \Sigma$, the columns of $XU$ and $YV$ are the principal vectors between $[X]$ and $[Y]$. The geodesic between $[X]$ and $[Y]$ rotates the columns of $XU$ to the columns of $YV$ while the diagonal elements of $\Sigma$ encode the cosine of the angles between these corresponding columns. 

\bigskip

\begin{figure}[!htbp]
\centering
\includegraphics[width=12cm]{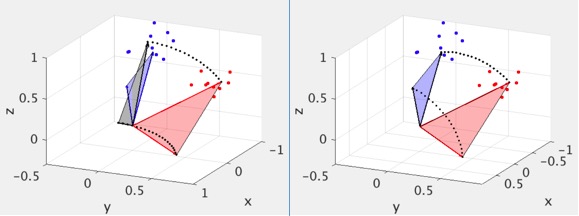}
\caption{A comparison of geodesics on the Grassmannian (left) and flag (right) manifolds for representing the distance between two data sets.
The red subspace is being moved to the blue subspace via the gray subspace in each case.}
\label{GrVflag}
\end{figure}

\section{The Flag Manifold}
\label{FLtext}
The distinction between geodesics
on Grassmannians and flags is
captured pictorially
in Figure \ref{GrVflag}.  For Grassmannians, one is moving a subspace into another subspace
along the shortest trajectory.
In the flag setting, this trajectory 
has to remain faithful to the nesting
structure of the subspaces.  In 
Figure \ref{GrVflag} (right) we see 
the required flag alignment of the coordinate directions in the 2D subspace whereas no alignment
is required for the Grassmannian (left).  The details and ramifications of this difference are elucidated below.

\subsection{Flags and their appearance in data analysis}
A $flag$ of subspaces in $\mathbb R^n$ is a nested sequence of subspaces $\{\mathbf{0}\} \subset \mathbf{V_1}\subset \mathbf{V_2}\subset \cdots \subset \mathbf{V_d} = \mathbb{R}^n$. The signature or type of the flag is the sequence $(\dim{\mathbf V_1}, \dim{\mathbf V_2}, \dots, \dim{\mathbf V_d})$. This dimension information can be also be encoded as the sequence $(\dim{\mathbf V_1}, \dim{\mathbf V_2}-\dim{\mathbf V_1}, \dim{\mathbf V_3}-\dim{\mathbf V_2}, \dots, \dim{\mathbf V_d}-\dim{\mathbf V_{d-1}})$. In this paper, we will use this second type of encoding for the signature of  a flag, thus we will identify the type  of  a flag in $\mathbb R^n$ by the sequence of positive integers $(n_1, n_2, \dots, n_d)$ where $\mathrm{dim}\, V_j = \Sigma_{i=1}^j n_i$ and $n_1+n_2+\cdots+n_d = n$. We let $FL(n_1,n_2,\dots,n_d)$ denote the {\it flag manifold} whose points parameterize all flags of type $(n_1,n_2,\dots,n_d)$. As a special case, a flag of type $(k,n-k)$ is simply a $k-$dimensional subspace of $\mathbb{R}^n$ (which can be considered as a point on the Grassmann manifold $Gr(k,n)$). Hence $FL(k,n-k)=Gr(k,n)$. The idea that the flag manifold is a generalization of the Grassmann manifold will be utilized in Section~\ref{sec:flag_matrix_representation} to introduce the geodesic formula on the flag manifold (see \cite{2019arXiv190700949Y} for a nice expanded development of the geodesic formula). The nested structure inherent in a flag appears naturally in the context of data analysis. 
\begin{enumerate}
     \item Multi-resolution analysis: the wavelet decomposition of data  into components in a nested sequence of vector spaces
     also has a flag structure.
     Each {\it scaling} subspace $V_j$ is a dilation of its adjacent neighbor $V_{j+1}$ in the sense that if
     $f(x) \in V_j$ then there is a reduced resolution copy $f(x/2) \in V_{j+1}$ \cite{mallat_89b,mallat_89c,daubechies}. In brief, the  sequence of nested scaling subspaces $\cdots \subset V_2 \subset V_1 \subset V_0 \subset V_{-1} \subset \cdots$
     can be viewed as a point on a flag manifold. 
     
      \item SVD basis of a real data matrix:
        Let $X \in \mathbb{R}^{n\times p}$ be a real data matrix consisting of $p$ samples living in $\mathbb R^n$. The left singular vectors $U$ obtained from the compact SVD, $X = U\Sigma V^T$, determine an ordered basis for the column span of $X$. The order is based on the magnitude of singular values. This order provides a straightforward way to associate a flag to $U$. For example, to associate a flag with signature $(1,1, \dots, 1)$ to $U = [u_1|u_2|\dots |u_k]$, we construct the nested sequence of subspaces $\mathrm{span}([u_1])\subsetneq \mathrm{span}([u_1|u_2])\subsetneq \cdots \subsetneq \mathrm{span}([u_1|\cdots|u_k])\subsetneq \mathbb{R}^n$. This flag of type $ (1,1,\dots,1,n-k)$ in $\mathbb{R}^n$ corresponds to a point $[U]$ on $FL(1,1, \dots ,1,n-k)$. As will be discussed in Section~\ref{results}, using an SVD 
        basis of a data set to produce a flag with a given signature can provide additional information when comparing data sets.  
\end{enumerate}
\subsection{Representation of the flag manifold}\label{sec:flag_matrix_representation}
The flag manifold $FL(n_1,n_2,\dots,n_d)$ parametrizes all flags of type $(n_1,n_2,\dots,n_d)$. The presentation in \cite{edelman1998geometry} describes how to view the Grassmann manifold $Gr(k,n)$ as the quotient manifold $O(n)/O(k)\times O(n-k)$. 
Similarly, we can view a flag manifold as a quotient manifold constructed from $O(n)$. In particular, $FL(n_1,n_2,\cdots,n_d) \cong O(n)/O(n_1)\times O(n_2) \times \cdots \times O(n_d)$ where $ n_1 + n_2 + \cdots + n_d = n$. In this definition, $O(n_1)\times O(n_2) \times \cdots \times O(n_d)$ denotes the subgroup of $O(n)$ consisting of block diagonal matrices with elements from $O(n_k)$ in the $k^{th}$ block. Although it is common to represent a flag manifold as a quotient manifold of $O(n)$, it is more convenient to represent a flag manifold as a quotient manifold of $SO(n)$ for the purposes of computations involving the $\exp$ map (since $\exp(H)\in SO(n)$ for any skew-symmetric matrix $H$). Hence for the computations in this paper, we make the representation $FL(n_1,n_2,\cdots,n_d) \cong SO(n)/S(O(n_1) \times \cdots \times O(n_d))$. Let $Q\in SO(n)$ be an $n$-by-$n$ orthogonal matrix, the equivalence class $[Q]$, representing a point on the flag manifold, is the set of orthogonal matrices
\[
[Q]=
\left \{
Q
\left[
\begin{array}{cccc}
M_1 & 0 &\cdots & 0\\
0 & M_2 &\cdots &0\\
\vdots & &\ddots &\vdots \\
0 & \cdots &&M_d
\end{array}
\right]
\right \}
\]
where $\sum_{i=1}^d n_i = n$, $M_i \in O(n_i)$ and $\prod_{i=1}^{d} \det(M_i)=1$.

\subsubsection{Example: $FL(1,1,1)$}
As a special case, a flag of type $(1,1,\cdots,1)$ is called a full flag and $FL(1,1,\cdots,1)$ is the full flag manifold in $\mathbb{R}^n$. In Figure~\ref{fig:full_flag}, we present a visualization of the nested structure of a full flag in $\mathbb{R}^3$, namely a $1$-dimensional line living in a $2$-dimensional plane living in $\mathbb{R}^3$. The set of all such flags is $FL(1,1,1) \cong O(3)/O(1) \times O(1) \times O(1)$. From the perspective of comparing data sets, Figure~\ref{fig:ellisoidal_data} shows that the SVD basis of ellipsoidal data points corresponds to a flag on $FL(1,1,1)$. Let $[u_1,u_2,u_3]\in O(3)$ be the SVD basis of  some ellipsoid ordered by the corresponding singular values, here $u_1$,$u_2$,$u_3$ are simply the major, median and minor axis respectively and $[u_1,u_2,u_3]$ is a flag representation of the ellipsoid data set. Comparing two ellipsoids amounts to measuring the geodesic distance between the two corresponding flags on $FL(1,1,1)$.     

\begin{figure}[!htbp]
\centering
\includegraphics[width=8cm]{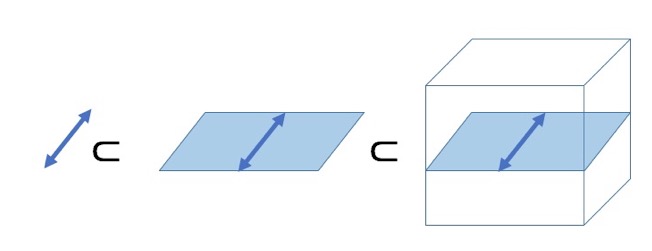}
\caption{A visualization of a full flag in $\mathbb R^3$.}
\label{fig:full_flag}
\end{figure}

\subsection{Tangent space at $[Q]$ to $FL(n_1,n_2,\cdots,n_d)$}
Let $Q$ be an element of $SO(n)$ and let $(n_1, n_2, \dots, n_d)$ be any sequence of positive integers which add up to $n$. We can use $Q$ to build a flag with signature $(n_1, n_2, \dots, n_d)$. In doing this, we can consider $Q$ as a representative for a point $[Q]$ in $FL(n_1,n_2,\cdots,n_d)$. A tangent vector at $Q \in SO(n)$ can be decomposed uniquely as a component in a direction that does not modify the nested sequence of subspaces and a component in an orthogonal direction that does. The latter represent a tangent vector to $FL(n_1,n_2,\cdots,n_d)$ at $[Q]$.  It can be readily computed that tangent vectors in directions that preserve the flag $[Q]$ correspond to $n$-by-$n$ block diagonal skew-symmetric matrices of  the form:
\begin{equation}\label{eq:flag_vertical}
G = \left[
\begin{array}{cccc}
G_1 & 0 &\cdots & 0\\
0 & G_2 &\cdots &0\\
\vdots & &\ddots &\vdots \\
0 & \cdots &&G_d
\end{array}
\right],
\end{equation}
where $G_i$ is an $n_i$-by-$n_i$ skew-symmetric matrix. The  span of matrices of this form is sometimes called the vertical space of the quotient manifold. The horizontal space is defined to be the orthogonal complement to the vertical space with respect to the standard inner product on matrices.
Thus, the horizontal space consists of matrices of the form:
\begin{equation}\label{eq:flag_tangent}
H=
\left[
\begin{array}{cccc}
\mathbf{0}_{n_1} &-B_{2,1}^T &   \cdots    &-B_{d,1}^T      \\
B_{2,1}       & \mathbf{0}_{n_2} &       & -B_{d,2}^T    \\
     \vdots            &                  &\ddots &  \vdots   \\
   B_{d,1}             &    B_{d,2}            &  \cdots &\mathbf{0}_{n_d}
\end{array}
\right]
\end{equation}
where $ \mathbf{0}_{n_i}$ denotes an $n_i \times n_i$ matrix of zeros and $B_{i,j}\in \mathbb{R}^{n_i \times n_j}$. Elements in the horizontal space correspond to elements in the tangent space to $FL(n_1,n_2,\cdots,n_d)$ at $[Q]$, i.e. to  elements in $T_{[Q]}FL(n_1,n_2,\cdots,n_d)$.

\subsection{Geodesic and distance: $\exp$ and $\log$ map}
We now describe the exponential map and logarithmic map in the setting of flag manifolds.
\subsubsection{Exponential map}
As is mentioned earlier, a geodesic path on $SO(n)$ starting at a point $Q$ is given by an exponential flow $Q(t) = Q \exp(t  X)$ where $ X \in {\mathbb R}^{n \times n}$ is any skew-symmetric matrix. Viewing $FL(n_1,n_2,\dots,n_d)$ as a quotient manifold of $SO(n)$, one can show that a geodesic on $SO(n)$ is also a geodesic on $FL(n_1,n_2,\dots,n_d)$ as long as the skew symmetric matrix $X$ points in a direction that is perpendicular to the orbit determined by $S(O(n_1)\times O(n_2) \times \cdots \times O(n_d))$. This leads one to conclude that a geodesic path on $FL(n_1,n_2,\dots,n_d)$ at $[Q]$ is an exponential flow of the form $Q(t) = Q \exp(t H)$
where $H$ takes the form in~\eqref{eq:flag_tangent}.

Since each flag is an equivalence class of matrices, $Q(t)$ is just one of the possible representations of a given geodesic flow. Each geodesic flow emanating from $[Q]\in FL(n_1,n_2,\cdots,n_d)$ has the form
\begin{equation}
[Q(t)] = \left \{ Q\exp(t H) \left[
\begin{array}{cccc}
M_1 & 0 &\cdots & 0\\
0 & M_2 &\cdots &0\\
\vdots & &\ddots &\vdots \\
0 & \cdots &&M_d
\end{array}
\right]
\right \}
\end{equation}
where $M_i \in O(n_i)$ and $\prod_{i=1}^{d} \det(M_i) = 1$. Equipped with the metric induced by the inner product $<A,B>=\frac{1}{2}Tr(A^TB)$, we can compute the length of the path between $[Q(0)]$ and $[Q(1)]$ along the geodesic determined  by $H$
\begin{align}
Length_H([Q(0)],[Q(1)])
                &= \sqrt{\dfrac{1}{2}\mathrm{Tr}(H^TH)} 
                 = \sqrt{\dfrac{1}{2}\sum_{j=1}^l \lambda_j^2} \label{eq:flag_distance}
\end{align}
where $\{\pm i\lambda_j\}$ are the eigenvalues of $H$. This mapping of a tangent vector (based at $[Q]$) to the flag manifold is referred to as the exponential map which in this paper is found by applying the matrix exponential.

\subsubsection{Logarithmic map}
In data analysis, it is often the case that one is given data sets or representations of data sets (e.g. through an SVD basis) and one wants to measure their similarity. If the representation of  the data is given as an orthonormal matrix, $M$, one can consider $M$ as a representative  for a point  $[M]$ on a flag  manifold. An interesting feature of flag manifolds is that  there are  typically many geodesics between points. In order to measure the distance between two points on a flag  manifold, one needs to find the length of the {\it shortest} geodesic between their representations. In order to do this, one needs to find a tangent vector, $H$,  that achieves the smallest value for $<H,H>$  among all tangent vectors determining a geodesic between the points. This tangent vector is found via the inverse operation of the exponential map (referred  to as the logarithmic map). In this section we will present an iterative algorithm which approximates the tangent vector for given representatives and, by iterating through different representatives, leads to a method to measure the distance between two flags. 
Let $[Q_0],[Q_1]$ be two points on $FL(n_1,n_2,\cdots,n_d)$. Determining a tangent vector which can be used to construct a geodesic from $[Q_0]$ to $[Q_1]$ boils down to solving the following equation
\begin{equation}
Q_1  = Q_0 \exp(H) \left[
\begin{array}{cccc}
M_1 & 0 &\cdots & 0\\
0 & M_2 &\cdots &0\\
\vdots & &\ddots &\vdots \\
0 & \cdots &&M_d
\end{array}
\right]
\end{equation}
where $M_i \in O(n_i)$ and $\prod_{i=1}^{d} \det(M_i) = 1$ and $H$ takes the form in~\eqref{eq:flag_tangent}. One can simplify this equation by multiplying on the left with $Q_0^T$. We obtain $Q=\exp(H)Q^{\prime}$ where $Q=Q_0^TQ_1$ and $Q^\prime$ denotes the  block diagonal  matrix above.

Instead of solving for $H$ directly, we modify our objective so that we are solving
\begin{equation}\label{eq:iterative_equation}
Q = \exp(H) \exp(G),
\end{equation}
where $G$ has the form in~\eqref{eq:flag_vertical} and $H$ has the form in~\eqref{eq:flag_tangent}.  We propose an iterative alternating algorithm to solve~\eqref{eq:iterative_equation}. First we introduce two projections $\mathrm{P}_H(\cdot)$ and $\mathrm{P}_G(\cdot)$, which project any $n$-by-$n$ skew-symmetric matrix to be of the forms in~\eqref{eq:flag_tangent} and~\eqref{eq:flag_vertical} respectively. The idea of the algorithm is to fix $H$ then solve for $G$ alternating with fix $G$ then solve for $H$. Given an initial guess for $G$, call it $G^{(0)}$, we can solve for $H$, i.e. $\hat{H} = \log(Q\exp(-G^{(0)}))$ and then project $\hat{H}$ to its desired form to obtain $H^{(1)} = \mathrm{P}_H(\hat{H})$. Similarly, we approximate $G$ as $G^{(1)} = \mathrm{P}_G(\exp(-H^{(1)})Q)$ and iterate. Here we present the iterative alternating algorithm in Algorithm~\ref{alg:non_oriented_iterative_exp_log_algorithm}. It is important to note that in these computations, we work implicitly on the fully oriented flag manifold $SO(n)/SO(n_1)\times SO(n_2) \times \cdots \times SO(n_d)$. There is a natural $2^{d-1}$ to $1$ map from the fully oriented flag manifold to the flag manifold. For each of these $2^{d-1}$ elements on the fully oriented flag manifold, that descend to the same element  on   the flag manifold, we apply the iterative alternating algorithm. All that is left to  do  is  to  pick the "optimal" $H$, i.e. the one with the smallest value of $<H,H>$, among the $H$ arising as output from the iterative alternating algorithm. Each converged solution of the iterative alternating algorithm corresponds to a geodesic on the fully oriented flag manifold. Since $<H,H>$ measures the length of the geodesic determined by $H$, we are picking the shortest length among these geodesics. It is worth noting that in carrying  out this algorithm, we are also solving the distance problem on {\it any} partially oriented flag  manifold (but that  is a story for another day). Algorithm~\ref{alg:full_oriented_flag} is presented to sample all representations of a given flag on the fully oriented flag manifold. Thus one cycles through representatives generated by Algorithm~\ref{alg:full_oriented_flag}, feed  these into Algorithm~\ref{alg:main_algorithm}, and pick the $H$ which has the smallest value for $<H,H>$. 

An overview of the main algorithm is presented as follows,
\begin{itemize}
\item[1.] Present two (special) orthogonal matrix representations (of data sets) $X_1, X_2 \in SO(n)$ and the flag structure $\mathbf{p} = \{n_1,\cdots,n_d\}$ to the algorithm. Move $X_1$ to the origin (identity): $Q = X_2^TX_1$.
\item[2.] Compute all $2^{d-1}$ elements of $Q$ in the fully oriented manifold via Algorithm~\ref{alg:full_oriented_flag}: $\{Q_i\}_{i=1}^{2^{d-1}}$ = generateQi(Q,\textbf{p})
\item[3.] For each element $Q_i \in \{Q_i\}_{i=1}^{2^{d-1}}$, solve Equation~\eqref{eq:iterative_equation} using Algorithm~\ref{alg:non_oriented_iterative_exp_log_algorithm}: $H_{i}^{(j)},G_{i}^{(j)}$ = iterativeSolver($Q_i$,\textbf{p}), iterate this process $M$ times, i.e. $j=1,\cdots,M$. Find the solution associated with the minimum distance: $H_i^{*} = \arg \min \sqrt{\frac{1}{2} \mathrm{Tr}(H_i^{(j)T}H_i^{(j)}}) $ to obtain the \emph{shortest} geodesic (on the corresponding partially oriented flag).
\item [4.] Among all the \emph{shortest} geodesics on partially oriented flags, find the \emph{shortest} geodesic on the fully oriented flag: $H^* = \arg \min \sqrt{\frac{1}{2} \mathrm{Tr}(H_i^{*T}H_i^{*}})$
\end{itemize}
The pseudo code for the main algorithm is presented in Algorithm~\ref{alg:main_algorithm} calling subroutine Algorithm~\ref{alg:non_oriented_iterative_exp_log_algorithm} and Algorithm~\ref{alg:full_oriented_flag}.

\RestyleAlgo{boxruled}
\LinesNumbered
\begin{algorithm}
	\SetKwInput{KwData}{Input Data}
	\KwData{$X_1,X_2 \in SO(n)$,  $\mathbf{p}=(n_1,n_2,\dots,n_d)$,M,maxIter,$\mathrm{\epsilon}$}
	\SetKwInput{KwData}{Output Data}
	\KwData{$H^*$, $G^* $}
	\SetKwInput{KwData}{Define}
	\KwData{d(H) = $\mathrm{\sqrt{\dfrac{1}{2}\mathrm{Tr}(H^TH)}}$}
	 \SetKwFunction{FMain}{main}
	 \SetKwProg{Fn}{Function}{:}{}
	 \Fn{\FMain{$X_1$, $X_2$, $\mathbf{p}$}}{
	$\mathrm{Q} = \mathrm{X_1}^T\mathrm{X_2}$\\
	$\mathrm{d^*} = \mathrm{\infty}$\\
	$\mathrm{\{Q_i\}_{i=1}^{2^{d-1}}}$ = generateQi(Q,$\mathbf{p}$) \\
	\For{$\mathrm{Q}$ in $\mathrm{\{Q_i\}_{i=1}^{2^{(d-1)}}}$}{
	\For{$\mathrm{i = 1, \cdots,M}$}{
	H, G = iterativeSolver(Q,\textbf{p},maxIter,$\mathrm{\epsilon}$)\\
	
	\uIf{$\mathrm{d}^*$ > $\mathrm{d(H)}$}{
	$\mathrm{d}^*$, $\mathrm{H}^*$, $\mathrm{G}^*$ = d(H), H, G}
	}
	}
	\KwRet{$\mathrm{d^*},\mathrm{H^*}$,$\mathrm{G^*}$}
}
\caption{Main algorithm}
\label{alg:main_algorithm}
\end{algorithm}

\RestyleAlgo{boxruled}
\LinesNumbered
\begin{algorithm}
	\SetKwInput{KwData}{Input Data}
	\KwData{$\mathrm{Q \in SO(n)}$,  $\mathbf{p}=(n_1,n_2,\dots,n_d)$, maxIter, $\mathrm{\epsilon}$}
	\SetKwInput{KwData}{Output Data}
	\KwData{$\mathrm{H^{(k)}}$, $\mathrm{G^{(k)} }$}
	 \SetKwFunction{FMain}{iterativeSolver}
	 \SetKwProg{Fn}{Function}{:}{}
	 \Fn{\FMain{$\mathrm{Q}$, $\mathbf{p}$},$\mathrm{maxIter}$,$\mathrm{\epsilon}$}{
      Generate random $\mathrm{G^{(0)}}$\\
	 k = 0\\
	\While{$\mathrm{k \leq iterMax}$ $\mathbf{and}$  $\mathrm{err < \epsilon}$}
	{
	$\mathrm{k = k + 1}$\\
	$\mathrm{H^{(k)}} = \mathrm{P_H(\log(Q\, \exp(-G^{(k-1)})))}$\\
	$\mathrm{G^{(k)}} = \mathrm{P_G(\log(\exp(-H^{(k)})Q))}$\\
	$\mathrm{err = \|Q-\exp(H)\exp(G)\|_F}$\\
	}
	
	\KwRet{$\mathrm{H^{(k)}}$,$\mathrm{G^{(k)}}$}
	
}
\caption{Iterative Alternating algorithm}
\label{alg:non_oriented_iterative_exp_log_algorithm}
\end{algorithm} 

\RestyleAlgo{boxruled}
\LinesNumbered
\begin{algorithm}
\SetKwFunction{FMain}{generateQi}
	  \SetKwProg{Fn}{Function}{:}{}
	  \Fn{\FMain{$\mathrm{Q}$,$\mathbf{p}$}}{
	colHeader = [0,cumsum($\mathbf{p}$)]+1\\
	m = length(colHeader)\\
	n = floor(d/2)\\
	i = 1\\
	$\mathrm{Q_i}$ = $\mathrm{Q}$\\
	\For{$\mathrm{j = 1 : n}$}{
	C = nchoosek(colHeader, 2*j)\\
	\For{k = 1: size(C,1)}{
		$\mathrm{i = i+1}$\\
		$\mathrm{Q_i}$ = $\mathrm{Q}$\\
		$\mathrm{Q_i}$(:, C(k,:)) = -$\mathrm{Q_i}$(:, C(k,:))\\
	}	
	}	
\KwRet{$\mathrm{\{Q_i\}_{i=1}^{2^{(d-1)}}}$}
}
\caption{Fully-oriented flag representations(MATLAB pseudo code)}
\label{alg:full_oriented_flag}
\end{algorithm}

\begin{figure}[!htbp]
\centering
\includegraphics[height = 6cm, width=8cm]{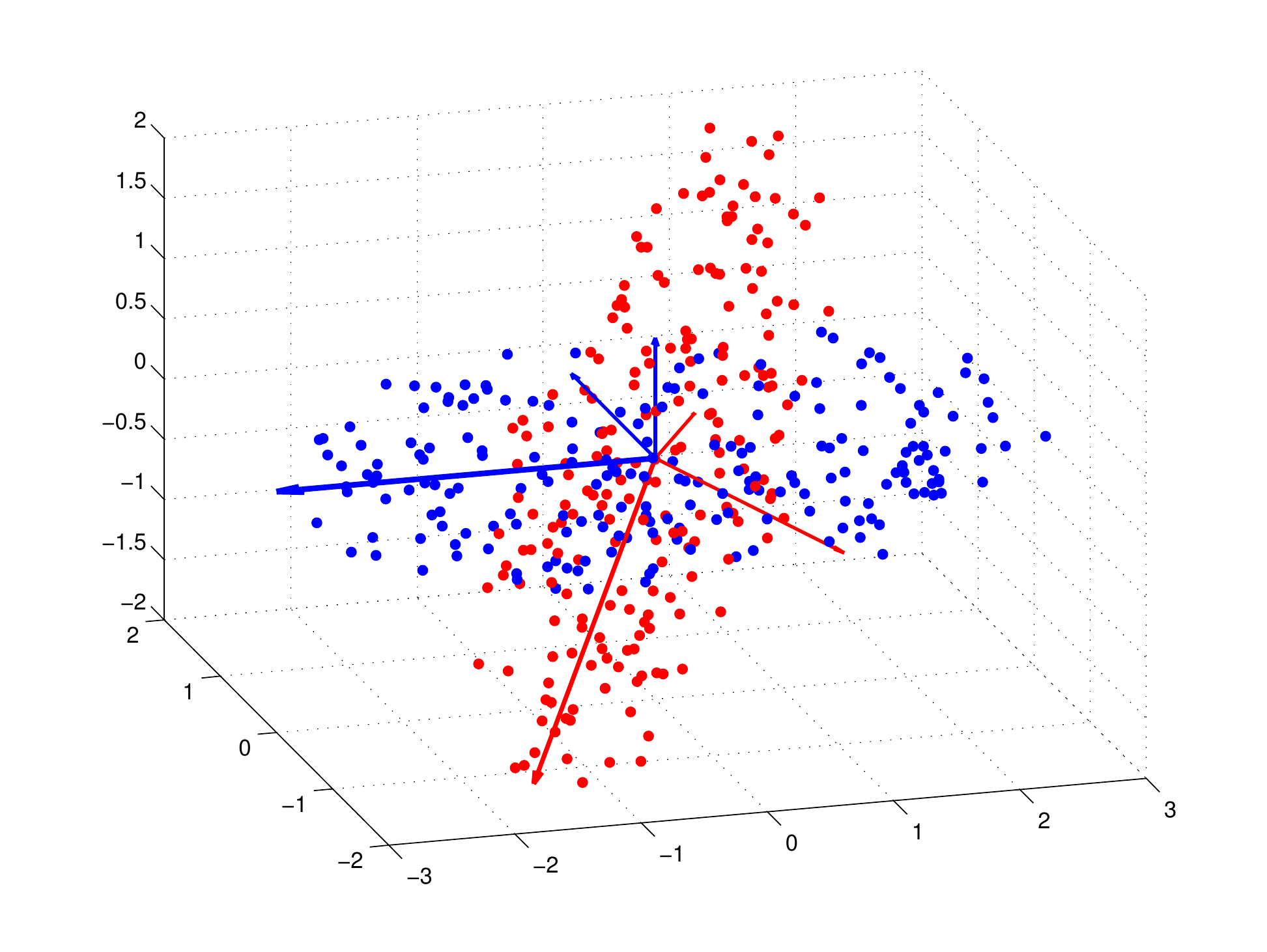}
\caption{Two sets of ellipsoid shaped data points in $\mathbb{R}^3$. Each SVD basis can be viewed as a point on $FL(1,1,1)$}.
\label{fig:ellisoidal_data}
\end{figure}

\subsection{2k Embedding}
For many practical applications, the trailing $n_d$ columns are not of interest, e.g. computations on $FL(k,n-k)=Gr(k,n)$ are usually performed using $n$-by-$k$ orthonormal matrices since only the first $k$ columns are of interest. Here in this section we will prove that the iterative algorithm~\ref{alg:non_oriented_iterative_exp_log_algorithm} can be performed in a lower dimensional space if $k = \sum_{i=1}^{d-1} n_i$ is relatively small, more specifically, if $k<n/2$. 

Without loss of generality, the geodesic between two flags of type $(n_1,n_2,\cdots,n_d)$ can always be identified with a geodesic between the identity matrix, $I$, and some $Q \in SO(n)$ by moving the initial point to $I$, i.e.,
\begin{equation}
    Q = I \exp(\left [\begin{array}{cc}A&-B^T\\B&0\end{array}\right ])
\end{equation}
where $k=\sum_{i=1}^{d-1} n_i$, $B\in \mathbb{R}^{(n-k)\times k}$ and $A$ is a $k$-by-$k$ skew-symmetric matrix of the form
\begin{equation}
    A = \left[
\begin{array}{cccc}
\mathbf{0}_{n_1} &  -B_{2,1}^T                &\cdots       &-B_{d-1,1}^T      \\
B_{2,1}                 & \mathbf{0}_{n_2} &       & -B_{d-1,2}^T      \\
\vdots                 &                  &\ddots &   \vdots    \\
   B_{d-1,1}             &     B_{d-1,2}             &  \cdots     &\mathbf{0}_{n_{d-1}}
\end{array}
\right].
\end{equation}
$Q(t) = I \exp(t \left [\begin{array}{cc}A&-B^T\\B&0\end{array}\right ]), t \in [0,1]$ traces an $n$-by-$n$ representation of the geodesic flow between $[I]$ and $[Q]$. The following theorem and its corollary provides a method to perform the iterative algorithm~\ref{alg:non_oriented_iterative_exp_log_algorithm} with $2k$-by-$2k$ matrices instead of $n$-by-$n$ matrices.
\begin{thm}\label{thm:2k_span}
Let $[Q]\in FL(n_1,n_2,\cdots,n_d)$.  Suppose $Q(t) = \exp(t \left [\begin{array}{cc}A&-B^T\\B&0\end{array}\right ])$ with $Q(0)=I$, $Q(1)=Q$ is a flag geodesic flow between $[I]$ and $[Q]$. If 
\begin{equation}\label{eq:tall_flag_geodesic}
q(t) = \exp(t \left [\begin{array}{cc}A&-B^T\\B&0\end{array}\right ])I_{n,k}
\end{equation}
and $\mathrm{span}\{q(0)\} \cap  \mathrm{span}\{q(1)\} = \{0\}$, then for all $t\in [0,1]$, $\mathrm{span}\{q(t)\} \subset \mathrm{span}\{[q(0),q(1)]\}$, where $k = \sum_{i=1}^{d-1}n_i$ and $I_{n,k}$ denotes the first $k$ columns of an $n$-by-$n$ identity matrix.
\end{thm}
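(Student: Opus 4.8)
The plan is to recast the statement entirely in terms of the block rows of the matrix exponential and reduce it to a single range inclusion. Write $X=\left[\begin{array}{cc}A&-B^T\\B&0\end{array}\right]$ and split $\exp(tX)$ into blocks conformally with $n=k+(n-k)$, so that its first $k$ columns, namely $q(t)$, take the form
\begin{equation}
q(t)=\left[\begin{array}{c}P(t)\\ R(t)\end{array}\right],
\end{equation}
with $P(t)$ of size $k\times k$ and $R(t)$ of size $(n-k)\times k$. Because the columns of $q(0)=I_{n,k}$ span the coordinate subspace $V_0=\{x\in\mathbb{R}^n:\,x_{k+1}=\dots=x_n=0\}$, the top block $P(t)$ always lies in $\mathrm{span}\{q(0)\}$; hence the whole theorem reduces to the claim that the bottom block satisfies $\mathrm{col}\,R(t)\subseteq\mathrm{col}\,R(1)$ for all $t\in[0,1]$. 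Indeed, if $R(t)e_j=R(1)w$ for some $w$, then $q(t)e_j-q(1)w$ has vanishing bottom block and therefore lies in $\mathrm{span}\{q(0)\}$, which places $q(t)e_j$ in $\mathrm{span}\{[q(0),q(1)]\}$.

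Next I would extract the differential equations obeyed by these blocks. Using $\tfrac{d}{dt}\exp(tX)=X\exp(tX)$ and reading off the first block column gives
\begin{equation}
\dot P=AP-B^TR,\qquad \dot R=BP,\qquad P(0)=I_k,\ R(0)=0.
\end{equation}
The second equation is the crux: integrating yields $R(t)=B\int_0^t P(s)\,ds$, so every column of $R(t)$ is $B$ applied to a vector, whence $\mathrm{col}\,R(t)\subseteq\mathrm{col}\,B$ for all $t$. Since $B\in\mathbb{R}^{(n-k)\times k}$, the subspace $\mathrm{col}\,B\subseteq\mathbb{R}^{n-k}$ has dimension at most $k$, and this bounding subspace is independent of $t$. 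It is worth noting that neither the skew-symmetry of $A$ nor the flag block-structure of $A$ is used in this step; only the vanishing of the bottom-right block of $X$ matters.

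It then remains to feed in the hypothesis. I would first argue it is equivalent to $R(1)$ having full column rank $k$: a vector $q(1)w$ lies in $V_0=\mathrm{span}\{q(0)\}$ precisely when its bottom block $R(1)w$ vanishes, and since the columns of $q(1)$ are orthonormal, $q(1)w=0$ forces $w=0$; thus $\mathrm{span}\{q(0)\}\cap\mathrm{span}\{q(1)\}=\{0\}$ holds if and only if $R(1)w=0\Rightarrow w=0$. Consequently $\dim\mathrm{col}\,R(1)=k$. Combining $\mathrm{col}\,R(1)\subseteq\mathrm{col}\,B$ from the previous paragraph with $\dim\mathrm{col}\,B\le k$ forces $\mathrm{col}\,B=\mathrm{col}\,R(1)$, and hence $\mathrm{col}\,R(t)\subseteq\mathrm{col}\,B=\mathrm{col}\,R(1)$ for every $t$. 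This is exactly the reduction set up in the first step, so the proof is complete.

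Since each computation is short, there is no single hard calculation; the genuinely load-bearing idea is the squeeze in the last step, namely recognizing that $\mathrm{col}\,R(t)$ is trapped inside the fixed, $t$-independent subspace $\mathrm{col}\,B$, while the nondegeneracy hypothesis is exactly what pins $\mathrm{col}\,B$ down to equal $\mathrm{col}\,R(1)$. The main point to handle carefully is the translation of the geometric hypothesis $\mathrm{span}\{q(0)\}\cap\mathrm{span}\{q(1)\}=\{0\}$ into the rank condition on $R(1)$, since if $R(1)$ failed to have full column rank the dimension count would collapse and the conclusion could genuinely fail.
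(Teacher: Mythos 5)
Your proof is correct and follows essentially the same route as the paper: both rest on the observation that the bottom block of $q(t)$ has column space confined to $\mathrm{col}\,B$ (the paper's Lemma~\ref{lemma:same_span}), and both use the hypothesis $\mathrm{span}\{q(0)\}\cap\mathrm{span}\{q(1)\}=\{0\}$ to force $\mathrm{col}\,B$ to equal the column space of the bottom block of $q(1)$ via the same rank count. The only difference is presentational: you integrate $\dot R = BP$ directly and finish with block linear algebra, whereas the paper packages the same facts as two projector-valued functions, $f(t)=(I-U_BU_B^T)Jq(t)$ and $g(t)=(I-UU^T)q(t)$, shown to vanish identically by differentiation.
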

Note that if $2k \geq n$, Theorem~\ref{thm:2k_span} is trivial. So here we assume $2k < n$.
Before proving the theorem, we need to introduce some notation. Let $q \defeq QI_{n,k} = q(1)$ be the first $k$ columns of $Q$. In fact, $q(t)$ defined in Equation~\eqref{eq:tall_flag_geodesic} can be understood as a geodesic path between $I_{n,k}$ and $q$ by viewing $FL(n_1,n_2,\cdots,n_d)$ as a quotient manifold of the Stiefel manifold $St(k,n)$ (refer to~\cite{2019arXiv190700949Y} for more details). Further, we write the  $n$-by-$k$ orthonormal matrix $q$ in block matrix form as
\begin{equation}\label{eq:defn_q}
q = \left [ \begin{array}{c} q_k\\ q_{n-k}\end{array}\right]
\end{equation}
where $q_k$ and $q_{n-k}$ denote the first $k$ rows and the trailing $n-k$ rows of $q$ respectively.
\begin{lemma}\label{lemma:same_span}
If $q(t)$ is defined as in Equation~\eqref{eq:tall_flag_geodesic}, such that $q(0) = I_{n,k}$ and $q(1) = q$, then $\mathrm{span}\{q_{n-k}\} = \mathrm{span}\{B\}$. 
\end{lemma}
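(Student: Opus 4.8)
The plan is to exploit the block structure of the skew-symmetric generator. Writing $X = \left[\begin{smallmatrix} A & -B^T \\ B & 0\end{smallmatrix}\right]$, the curve $q(t) = \exp(tX)\,I_{n,k}$ solves the linear system $\dot q(t) = X\,q(t)$ with $q(0) = I_{n,k}$. Partitioning $q(t) = \left[\begin{smallmatrix} q_k(t) \\ q_{n-k}(t)\end{smallmatrix}\right]$ as in~\eqref{eq:defn_q}, the lower block of $X$ decouples the second block row of the system into $\dot q_{n-k}(t) = B\,q_k(t)$ with $q_{n-k}(0) = 0$. Integrating gives $q_{n-k}(t) = B\int_0^t q_k(s)\,ds$, so at $t=1$ we obtain $q_{n-k} = B\,C$ with $C \defeq \int_0^1 q_k(s)\,ds$ a $k$-by-$k$ matrix. (The same identity $q_{n-k}=BC$ with $C=\sum_{m\ge 0} u_m/(m+1)!$ can be read directly off the power series of $\exp(X)$, where $u_m$ denotes the top block of $X^m I_{n,k}$ and one verifies that the bottom blocks satisfy $v_{m+1}=B\,u_m$.) Either way, every column of $q_{n-k}=BC$ is a $B$-image, which immediately yields the inclusion $\mathrm{span}\{q_{n-k}\} \subseteq \mathrm{span}\{B\}$.

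\textbf{The reverse inclusion.} For the opposite containment I would invoke the non-degeneracy hypothesis $\mathrm{span}\{q(0)\}\cap\mathrm{span}\{q(1)\}=\{0\}$ of Theorem~\ref{thm:2k_span}, under which this lemma is applied, and first recast it as a rank statement. Since $q$ has orthonormal columns, $w\mapsto qw$ is injective, and a vector $qw$ lies in $\mathrm{span}\{I_{n,k}\}$ exactly when its trailing block $q_{n-k}w$ vanishes; hence the intersection is trivial if and only if $\ker(q_{n-k})=\{0\}$, i.e. the $(n-k)$-by-$k$ matrix $q_{n-k}$ has full column rank $k$. Because $2k<n$, the matrix $B$ has at most $k$ independent columns, so $k = \dim\mathrm{span}\{q_{n-k}\} \le \dim\mathrm{span}\{B\} \le k$ forces equality of dimensions; together with the inclusion already established this gives $\mathrm{span}\{q_{n-k}\}=\mathrm{span}\{B\}$. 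Equivalently, full rank of $q_{n-k}=BC$ forces $C$ to be invertible, so right multiplication by $C$ preserves the column space of $B$.

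\textbf{Main obstacle.} The crux is exactly this reverse inclusion, and the difficulty is genuine rather than cosmetic: without the non-degeneracy hypothesis the equality can fail. For example, with $k=1$, $A=0$, $B=\pi$ one computes $q_{n-k}=\sin \pi = 0$, so $\mathrm{span}\{q_{n-k}\}=\{0\}$ while $\mathrm{span}\{B\}=\mathbb{R}$; and indeed there $\mathrm{span}\{q(0)\}=\mathrm{span}\{q(1)\}$, violating the hypothesis. Thus the decisive step in a correct proof is to make the hypothesis explicit and extract from it the full-rank property of $q_{n-k}$ (equivalently, the invertibility of $C$); once that is in hand, the block computation and the dimension count finish the argument routinely.
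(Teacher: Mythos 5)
Your proof is correct and follows essentially the same route as the paper's: both exploit the block structure of the generator to show that $\dot q_{n-k}(t)=Bq_k(t)$ forces $\mathrm{span}\{q_{n-k}\}\subseteq\mathrm{span}\{B\}$ (the paper does this by showing $f(t)=(I-U_BU_B^T)Jq(t)$ has vanishing derivative, you by integrating the ODE directly), and both then use the non-intersection hypothesis to conclude that $q_{n-k}$ has rank $k$, so a dimension count upgrades the inclusion to equality. Your explicit identification of where the hypothesis enters, and the counterexample showing it is needed, is a welcome clarification of a point the paper passes over quickly.
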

\begin{proof}
Let $U_BR_B \defeq B$ be the compact QR decomposition of $B$ ($U_B$: $(n-k)$-by-$k$, $R_B$: $k$-by-$k$).
Define 
\begin{equation}
f(t) = (I-U_BU_B^T)Jq(t)
\end{equation}
where $J = \left [\begin{array}{cc} 0& I_{n-k} \end{array} \right]$ is the last $n-k$ rows of the $n$-by-$n$ identity matrix. Hence left multiplication by $J$ on $q(t)$ simply selects the last $n-k$ rows of $q(t)$. By definition $f(0) = 0$. Differentiate $f(t)$ to get: 
\begin{align}
\dot{f}(t) & = (I-U_BU_B^T)J \left[ \begin{array}{cc} A&-B^T\\B&0 \end{array}\right] q(t) = 0
\end{align}
Therefore, $f(t)\equiv 0$ for $t\in[0,1]$. If we evaluate $f(t)$ at $t=1$, we get:
\begin{align}
f(1) 
    & = (I-U_BU_B^T)q_{n-k} = 0
\end{align}
By the assumption that $q(0)$ and $q(1)$ do not intersect, we know $q_{n-k}$ is of rank $k$ hence $U_B$ is also of rank $k$. The conclusion follows.
 \end{proof}
Now we present a proof to the theorem.
\begin{proof}
Let $UR \defeq [I_{n,k},q]$ be the thin QR-decomposition of $[q(0),q(1)]$. Consequently, $U$ is an orthonormal basis for $\mathrm{span}\{[q(0),q(1)]\}$. The $n$-by-$k$ orthonormal matrix $U$ takes the block form
\begin{equation}
U = \left [ \begin{array}{cc}I_k&0\\ 0&C\end{array} \right].
\end{equation}
Note that $\mathrm{span}\{C\} = \mathrm{span}\{q_{n-k}\}$ where $q_{n-k}$ is defined in Equation~\eqref{eq:defn_q}.
Define
\begin{equation}
g(t) = (I-UU^T)q(t).
\end{equation}
By definition, $g(0) = (I-UU^T)I_{n-k} = 0$. If we differentiate $g(t)$, we get:
\begin{align}
\dot{g}(t) 
            &=\left[ \begin{array}{cc} 0&0\\(I_{n-k}-CC^T)B&0 \end{array}\right]q(t)
\end{align}
By Lemma~\ref{lemma:same_span}, $\mathrm{span}\{B\} = \mathrm{span}\{q_{n-k}\} = \mathrm{span}\{C\}$. We conclude that $\dot{g}(t) \equiv 0$, which implies $g(t) \equiv 0$. Therefore $q(t)$ is always living in the $\mathrm{span}$ of $[q(0),q(1)]$.
\end{proof}
The theorem shows that the flag geodesic flow $q(t)$ between $I_{n,k}$ and $q$ never leaves the $2k$-dimensional subspace $\mathrm{span}\{[I_{n,k},q]\}$, which leads to the conclusion that the logarithmic map computation can be performed within this $2k$ dimensional space without loss of information. Here we introduce the following corollary.    
 \begin{cor}
 Suppose $q(t)$ is defined as in Equation~\eqref{eq:tall_flag_geodesic} such that $q(0)=I_{n,k}$ and $q(1) = q$. Let $UR \defeq [I_{n,k},q]$ be the compact QR-decomposition of $[q(0),q(1)]$, then $\phi(t) = U^Tq(t)$ is a geodesic flow between $\phi(0) = U^Tq(0)$ and $\phi(1) = U^Tq(1)$ on $FL(n_1,n_2,\cdots,n_{d-1},k)$. Moreover, $d(\phi(0),\phi(1)) = d(q(0),q(1))$ and $q(t) = UU^T\phi(t)$. 
 \end{cor}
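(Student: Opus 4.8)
The plan is to leverage Theorem~\ref{thm:2k_span} together with the explicit block form of $U$ obtained in its proof, namely $U = \left[\begin{array}{cc} I_k & 0 \\ 0 & C\end{array}\right]$, where $C$ is an $(n-k)$-by-$k$ orthonormal matrix with $\mathrm{span}\{C\} = \mathrm{span}\{B\}$ (Lemma~\ref{lemma:same_span}). The theorem guarantees $\mathrm{span}\{q(t)\} \subset \mathrm{span}\{U\}$ for all $t$, so the orthogonal projector $UU^T$ fixes $q(t)$, i.e.\ $UU^Tq(t)=q(t)$. Since $\phi(t) = U^Tq(t)$, this immediately gives $U\phi(t) = UU^Tq(t) = q(t)$, which is the reconstruction identity (the displayed formula $q(t)=UU^T\phi(t)$ is dimensionally the statement $q(t)=U\phi(t)$, equivalently $q(t)=UU^Tq(t)$). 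Thus the last assertion of the corollary is essentially free, and the remaining work is to identify $\phi(t)$ as a geodesic flow and to match the distances.

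For the geodesic claim I would differentiate. From $q(t) = \exp\!\big(t\left[\begin{array}{cc}A&-B^T\\B&0\end{array}\right]\big) I_{n,k}$ we have $\dot q(t) = \tilde H q(t)$ with $\tilde H = \left[\begin{array}{cc}A&-B^T\\B&0\end{array}\right]$. Writing $q(t)=U\phi(t)$ and left-multiplying by $U^T$ yields $\dot\phi(t) = (U^T\tilde H U)\phi(t)$, while $\phi(0) = U^TI_{n,k} = I_{2k,k}$. Hence $\phi(t) = \exp(t\hat H)I_{2k,k}$ with $\hat H := U^T\tilde H U$, a $2k$-by-$2k$ skew-symmetric matrix. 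A direct block multiplication using the form of $U$ together with $C^TC = I_k$ and $CC^TB = B$ (the latter from $\mathrm{span}\{C\}=\mathrm{span}\{B\}$) gives $\hat H = \left[\begin{array}{cc}A&-\hat B^T\\\hat B&0\end{array}\right]$ with $\hat B := C^TB \in \mathbb{R}^{k\times k}$. Since $A$ is itself the horizontal matrix for a flag of type $(n_1,\dots,n_{d-1})$ in $\mathbb{R}^k$, the matrix $\hat H$ is exactly of the horizontal form~\eqref{eq:flag_tangent} for $FL(n_1,\dots,n_{d-1},k)$ in $\mathbb{R}^{2k}$ (zero diagonal blocks of sizes $n_1,\dots,n_{d-1},k$). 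This shows $\phi(t)$ is a geodesic flow on $FL(n_1,\dots,n_{d-1},k)$.

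For the distance equality I would first verify that matched geodesics have equal length. Using $A^T=-A$, a short trace computation gives $\mathrm{Tr}(\tilde H^T\tilde H) = \mathrm{Tr}(-A^2) + 2\,\mathrm{Tr}(B^TB)$ and $\mathrm{Tr}(\hat H^T\hat H) = \mathrm{Tr}(-A^2) + 2\,\mathrm{Tr}(\hat B^T\hat B)$; since $\hat B^T\hat B = B^TCC^TB$ and $CC^TB=B$, we have $\mathrm{Tr}(\hat B^T\hat B)=\mathrm{Tr}(B^TB)$, so by~\eqref{eq:flag_distance} the two geodesics have the same length. To upgrade this to $d(\phi(0),\phi(1))=d(q(0),q(1))$ I would exhibit a length-preserving bijection between the geodesics joining $q(0),q(1)$ on $FL(n_1,\dots,n_d)$ and those joining $\phi(0),\phi(1)$ on $FL(n_1,\dots,n_{d-1},k)$: in the forward direction Theorem~\ref{thm:2k_span} forces every such $q$-geodesic into $\mathrm{span}\{U\}$, so $\phi=U^Tq$ is well defined and a geodesic of equal length; in the reverse direction one lifts a small-space geodesic with tangent $\left[\begin{array}{cc}\hat A&-\hat B^T\\\hat B&0\end{array}\right]$ to the big space via $\tilde H = \left[\begin{array}{cc}\hat A&-(C\hat B)^T\\C\hat B&0\end{array}\right]$, checking that $\tilde H$ is horizontal of type $(n_1,\dots,n_d)$, that $U^T\tilde H U=\hat H$, and that $U$ applied to the lifted flow recovers a $q$-geodesic with the correct endpoints (using $UU^Tq(1)=q(1)$). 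Taking minima over the two sides then gives the equality of distances.

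The main obstacle is the distance statement rather than the geodesic identification: equality of lengths for a single matched pair is a routine trace computation, but promoting it to equality of the \emph{minimal} lengths requires care that the correspondence $\tilde H\mapsto U^T\tilde H U$ is a genuine bijection onto all geodesics between $\phi(0)$ and $\phi(1)$ — in particular that the reverse lift always lands in the horizontal space of type $(n_1,\dots,n_d)$ and respects the finitely many orientation representatives produced by Algorithm~\ref{alg:full_oriented_flag}. Everything else (the block computation of $\hat H$, the initial condition $\phi(0)=I_{2k,k}$, and the reconstruction $q(t)=U\phi(t)$) follows directly from Theorem~\ref{thm:2k_span} and Lemma~\ref{lemma:same_span}.
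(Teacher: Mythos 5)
Your proof is correct and follows essentially the route the paper intends: the paper's entire ``proof'' is the one-line remark that the corollary follows by combining Theorem~1 with Corollary~2.2 of Edelman--Arias--Smith, and your argument (use Theorem~1 to get $UU^Tq(t)=q(t)$, conjugate the tangent to $\hat H = U^T\tilde H U$, verify $\hat H$ is horizontal for the type $(n_1,\dots,n_{d-1},k)$, and match lengths via $C^TC=I_k$ and $CC^TB=B$) is precisely that combination worked out explicitly, including the care you rightly flag about passing from equal lengths of matched geodesics to equality of the minimal distances. Your side observation that the stated reconstruction formula $q(t)=UU^T\phi(t)$ is dimensionally inconsistent and should read $q(t)=U\phi(t)$ is also correct.
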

This corollary can be proved by combining the results from Theorem~\ref{thm:2k_span} and Corollary 2.2 in~\cite{edelman1998geometry}.



\section{Numerical Experiments}
\label{results}

\subsection{Ellipsoid data}
The purpose of this synthetic example is to show the difference between flag geodesic and Grassmannian geodesic, as well as their corresponding geodesic distance under the context of  comparing data sets. As can be seen in Figure~\ref{fig:ellisoidal_data}, each ellipsoid data cloud contains 100 data points in $\mathbb{R}^3$. Let $\{r_i\}$ and $\{b_i\}$ denote the data points in the red and blue ellipsoid respectively. Each data set can be written as a short wide data matrix $[r_1,r_2,\cdots,r_{100}] = R \in \mathbb{R}^{3 \times 100}$ and $[b_1,b_2,\cdots,b_{100}] = B \in \mathbb{R}^{3 \times 100}$. We denote the SVD basis for each ellipsoid data set by $U_R = [u_R^{(1)},u_R^{(2)},u_R^{(3)}]$ and $U_B = [u_B^{(1)},u_B^{(2)},u_B^{(3)}]$. One can view the SVD basis as giving the major, medium, and minor axes of the corresponding ellipsoid. 

The Grassmannian geodesic distance between two bases is $0$ since the columns of $U_R$ or $U_B$ span all of $\mathbb{R}^3$. To compare two ellipsoids via the Grassmannian setting, one would typically represent the data sets with their first principal components namely $u_R^{(1)}$ and $u_B^{(1)}$, and then compute the distance between these two vectors on $Gr(1,3)$. Hence the Grassmannian geodesic between two ellipsoids is the path between two major axes and the distance is the angle between the major axes. The information contained in the relationship between the other two axes is lost. Note that this limitation comes from the Grassmannian rather than the data itself.

By representing two ellipsoids of data points by their SVD bases $U_R$, $U_B$ such that $[U_R]$, $[U_B] \in FL(1,1,1)$, one has finer resolution to describe the corresponding ellipsoids since $FL(1,1,1)$ has dimension $3$ (while $Gr(1,3)$ has dimension $2$). The geodesic between two flag representations correspondingly encodes more information than moving one major axis to another in the Grassmannian setting. 

\subsection{MNIST image data set}
Here we utilize the well-studied MNIST data set to illustrate the use of the flag manifold for comparing sets of SVD bases of "mixed" digits.
We select hand written digits "1" and "5" from the training set of the MNIST data set, where each digit is a $28 \times 28$ image. 
All images are vectorized and centered by subtracting the mean of all images.
Then we form a set of mixed digits data sets consisting of two classes, namely "major 1$/$minor 5" and "major 5$/$minor 1". 
"major 1$/$minor 5" (resp. "major 5$/$minor 1") is formed by concatenating $m$ "1"'s (resp. $m$ "5"'s) and $p$ "5"'s (resp. $p$ "1"'s). In general $m$ is assumed to be larger then $p$. 
Hence each data set is represented by a $784 \times (m+p)$ matrix. We compute the SVD basis for each $784 \times (m+p)$ matrix and select the first $k$ columns of the SVD basis as a representation for each data set. 
Thus each data set is represented by a $784 \times k$ orthonormal matrix. For the following experiment $m=16$, $p=9$ and $k=5$. 
We may consider each $784 \times 5$ SVD basis as a data point on $Fl(2,3,779)$ or $Gr(5,784)$. The first $5$ eigen-digits for both of the two classes in this experiment are demonstrated in Figure~\ref{fig:major_5_minor_1} and Figure~\ref{fig:major_1_minor_5}. One can compute the pairwise flag and Grassmannian geodesic distance to form the corresponding distance matrix. 
We then embed these data points to the Euclidean space by multi-dimensional scaling. 

In Figure~\ref{fig:comparison_flag_gr_mnist}, we see the configurations of MDS using Grassmannian(\ref{fig:gr_MNIST}) and flag distance(\ref{fig:flag_MNIST}). We observe that in \ref{fig:gr_MNIST}, the Grassmannian MDS configuration is showing overlapping between two classes. This is not surprising since each data point, no matter which class, is capturing the span of $"1"$'s and $"5"$'s. As can be seen in \ref{fig:flag_MNIST}, there is a clear separation between two classes except for one point. Note the input matrices fed to the algorithm are identical for both configurations. The difference is purely coming from the effect of the flag structure.

\begin{figure}[!htbp]
\centering
\begin{subfigure}{.45\textwidth}
\centering
\includegraphics[width=6cm,height=6cm]{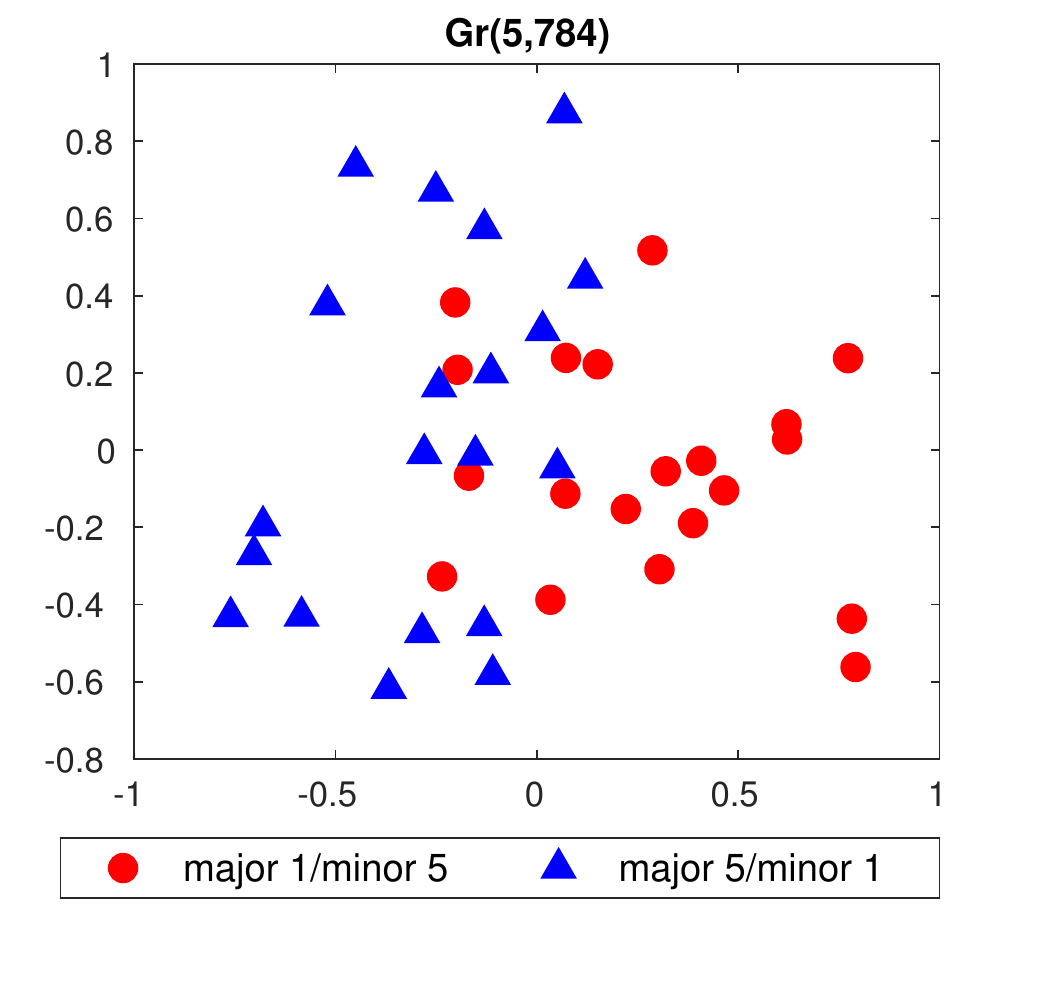}
\caption{Grassmannian MDS configuration}
\label{fig:gr_MNIST}
\end{subfigure}
\begin{subfigure}{.45\textwidth}
\centering
\includegraphics[width=6cm,height=6cm]{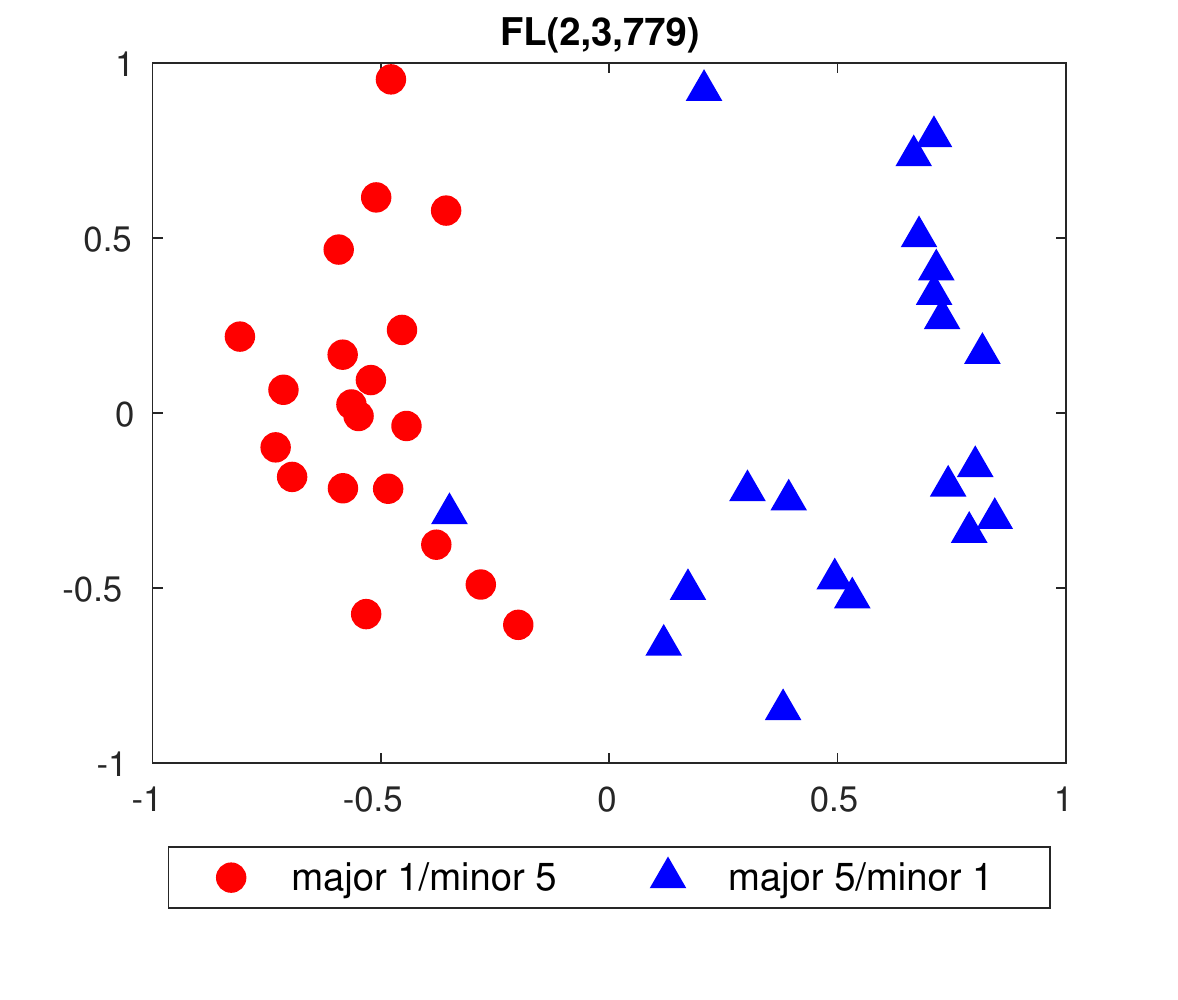}
\caption{Flag MDS configuration}
\label{fig:flag_MNIST}
\end{subfigure}
\caption{Comparison of Grassmannian and flag MDS configurations}
\label{fig:comparison_flag_gr_mnist}

\end{figure}

\begin{figure}[!htbp]
\centering
\includegraphics[width=10cm,height=2cm]{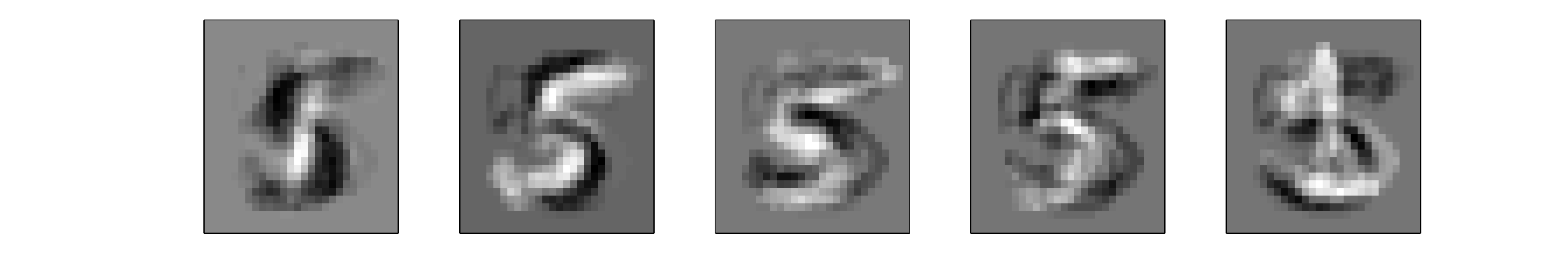}
\caption{Firt 5 eigen digits of major 5/minor 1 data set}
\label{fig:major_5_minor_1}
\end{figure}

\begin{figure}[!htbp]
\centering
\includegraphics[width=10cm,height=2cm]{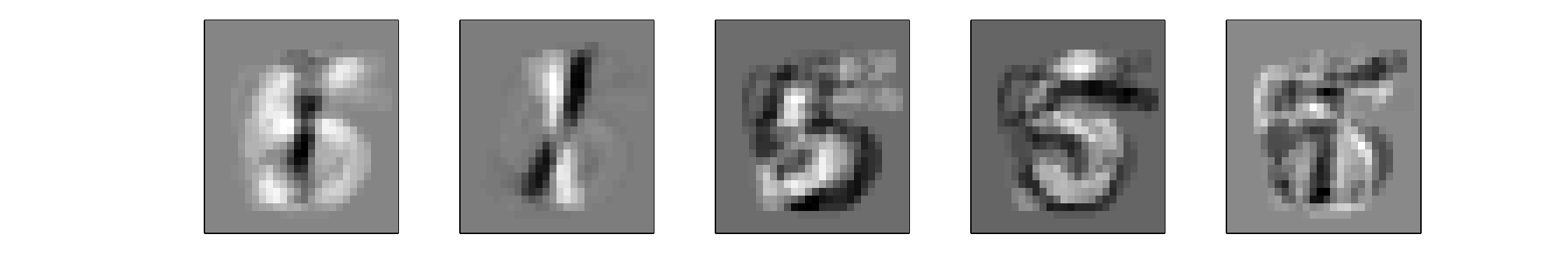}
\caption{First 5 eigen digits of major 5/minor 1 data set}
\label{fig:major_1_minor_5}
\end{figure}

\subsection{Indian Pines hyperspectral image data}
\begin{figure}[!htbp]
\centering
\includegraphics[width=8cm,height=10cm]{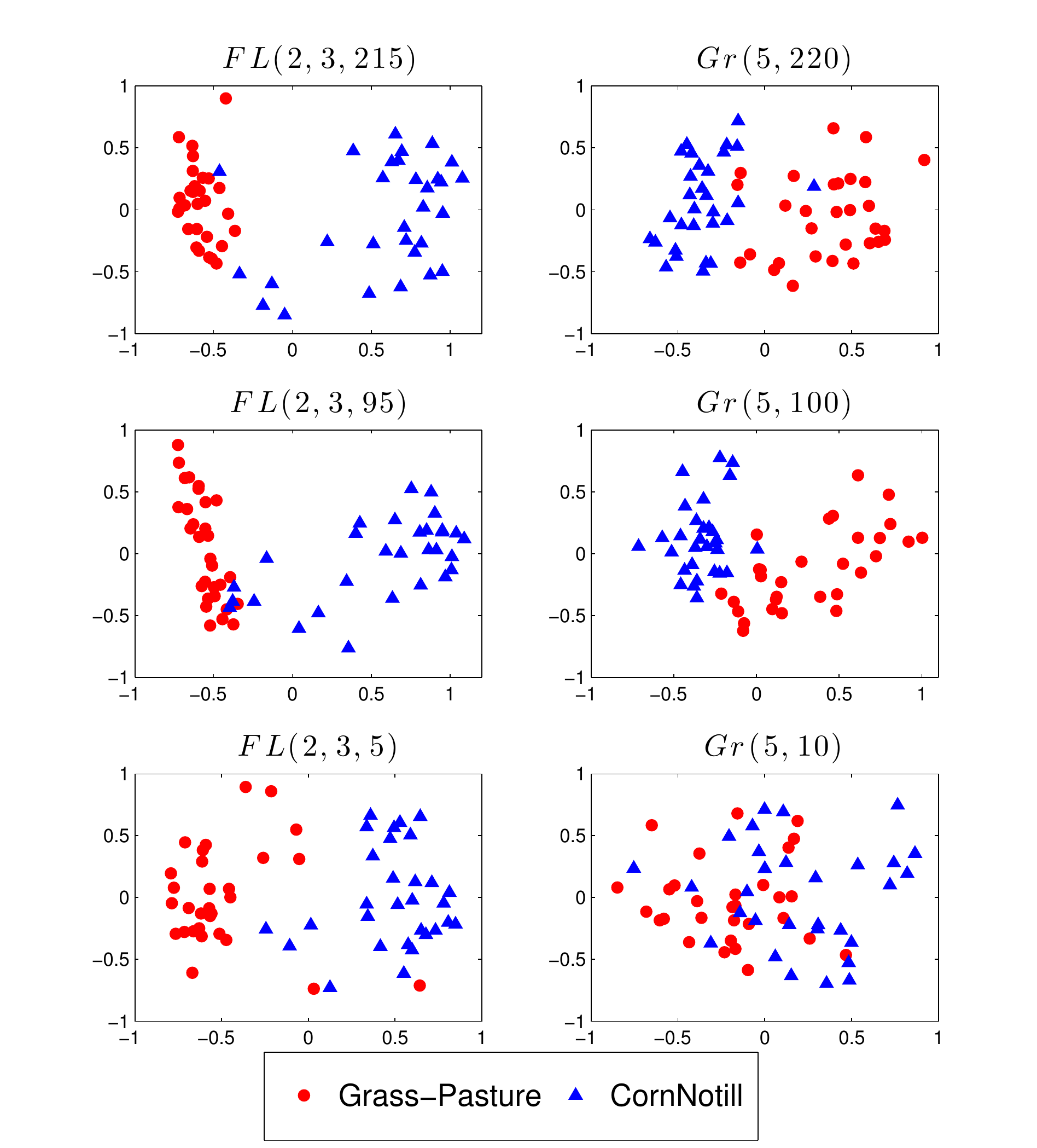}
\caption{A comparison(horizontal) of the Grassmannian and Flag manifolds for representing data sets. The subspace dimension $k$ fixed while the ambient dimension $n$ is varying from 220,100 to 10.  }
\label{fig:IP_fix_feature_dimension}
\end{figure}

To illustrate the utility of the proposed flag model in comparing real data sets, we apply it to the Indian Pines hyperspectral image data set. 
The hyperspectral images in this data set are $145 \times 145$ pixels by $220$ spectral bands (from $0.4\mu m$ to $2.4\mu m$). $10366$ pixels are labelled and each is assigned to one of the $16$ classes. Here we will test both the flag model and the Grassmann model on the task of visualizing sets of data sets. 

 For a chosen dimension $k$ (note that $k = \sum_{i=1}^{d-1}n_i$ for $FL(n_1,n_2,\cdots,n_d)$), we assemble $30$ $n \times k$ matrices $X_i$ from each class (so $p = 60$ data matrices total). Each data matrix consists of $k$ $ 200 \times 1$ data vectors which belong to one of the two classes. Then for each matrix $X_i$, a compact SVD is applied to obtain an SVD/PCA basis, hence each data point (subspace) is represented by a $220 \times k$ orthonormal matrix $U_i$ where $U_i\Sigma_iV_i^T = X_i$. The distance between SVD bases, assumed as representatives for points on a given flag manifold, can then be computed to obtain a $p \times p$ distance matrix. We use this distance matrix to embed these flags as points in Euclidean space via Multi-Dimensional Scaling (MDS). The first two coordinates of the optimal Euclidean configuration are selected for visualization in $\mathbb{R}^2$. Figure~\ref{fig:IP_fix_feature_dimension} illustrates the Euclidean embedding configurations for fixed subspace dimension $k = 5$ with various ambient dimensions using both the Grassmannian geodesic distance~\eqref{eq:gr_distance} and flag distance~\eqref{eq:flag_distance}. The ambient space is selected to be the $n$ spectral bands with highest responses for $n=100, 10, 5$. It is observed in the first two rows that both Grassmannian and flag geodesic distance provide a good separation with relatively large ambient dimension at $n = 220$ and $100$. When the ambient dimension is reduced to $n=10$, the third row of Figure~\ref{fig:IP_fix_feature_dimension} shows that the flag distance MDS embedding separates two classes in $\mathbb{R}^2$ while the Grassmannian MDS embedding shows heavy overlapping. Figure~\ref{fig:eigenvalue_plot} shows the eigenvalues corresponding to the MDS embedding using flag distance on $FL(2,3,5)$ (left) and $Gr(5,10)$ (right). As we can see, the largest eigenvalue on the left panel is dominating which also suggests that flag MDS configurations are separable in lower dimension, which we don't observe in the Grassmannian MDS eigenvalues plot. Figure~\ref{fig:various_feature_dim} shows, for fixed ambient dimension $n = 220$, how sets of data sets are pulled apart by increasing the dimension in the flag structure. From top left, we observe that the embedding of data points on $FL(1,219)$ to $\mathbb{R}^2$ live on a circle and are not separable. As we increase the flag structure dimension, the corresponding MDS configurations start to show more separation and for $FL(1,4,215)$, the embedding of two classes is linearly separable.

In Figure~\ref{fig:3class}, we select 6 bands (bands: 3,29,42,61,65,158) and use 20 pixels within the same class to form a data matrix of size $6 \times 30$. Each class consists of 20 such short and wide matrices and each matrix is represented by its $6$-by-$6$ SVD basis and assumed to  be representatives for points on $FL(2,2,2)$. The pairwise distance is computed to obtain MDS configurations on $\mathbb{R}^2$. It is observed that the MDS embeddings of 3 classes are separable in low dimensional space with only $6$ bands.

\begin{figure}[!htbp]
\centering
\includegraphics[width=9cm, height=4cm]{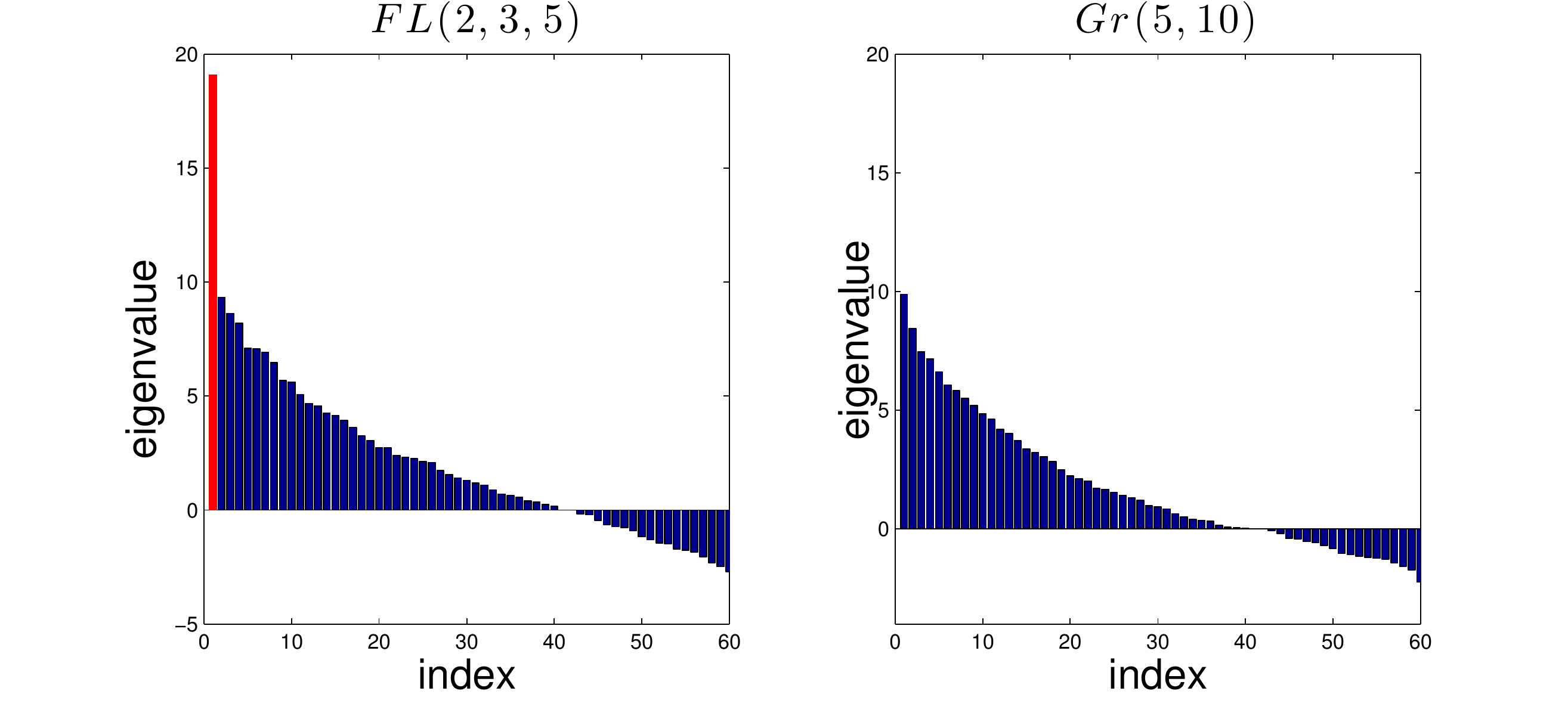}
\caption{Eigenvalues of MDS for Left:$FL(2,3,5)$, Right:$Gr(5,10)$ in descending order.}
\label{fig:eigenvalue_plot}
\end{figure}

\begin{figure}[!htbp]
\centering
\includegraphics[width=8cm,height=8cm]{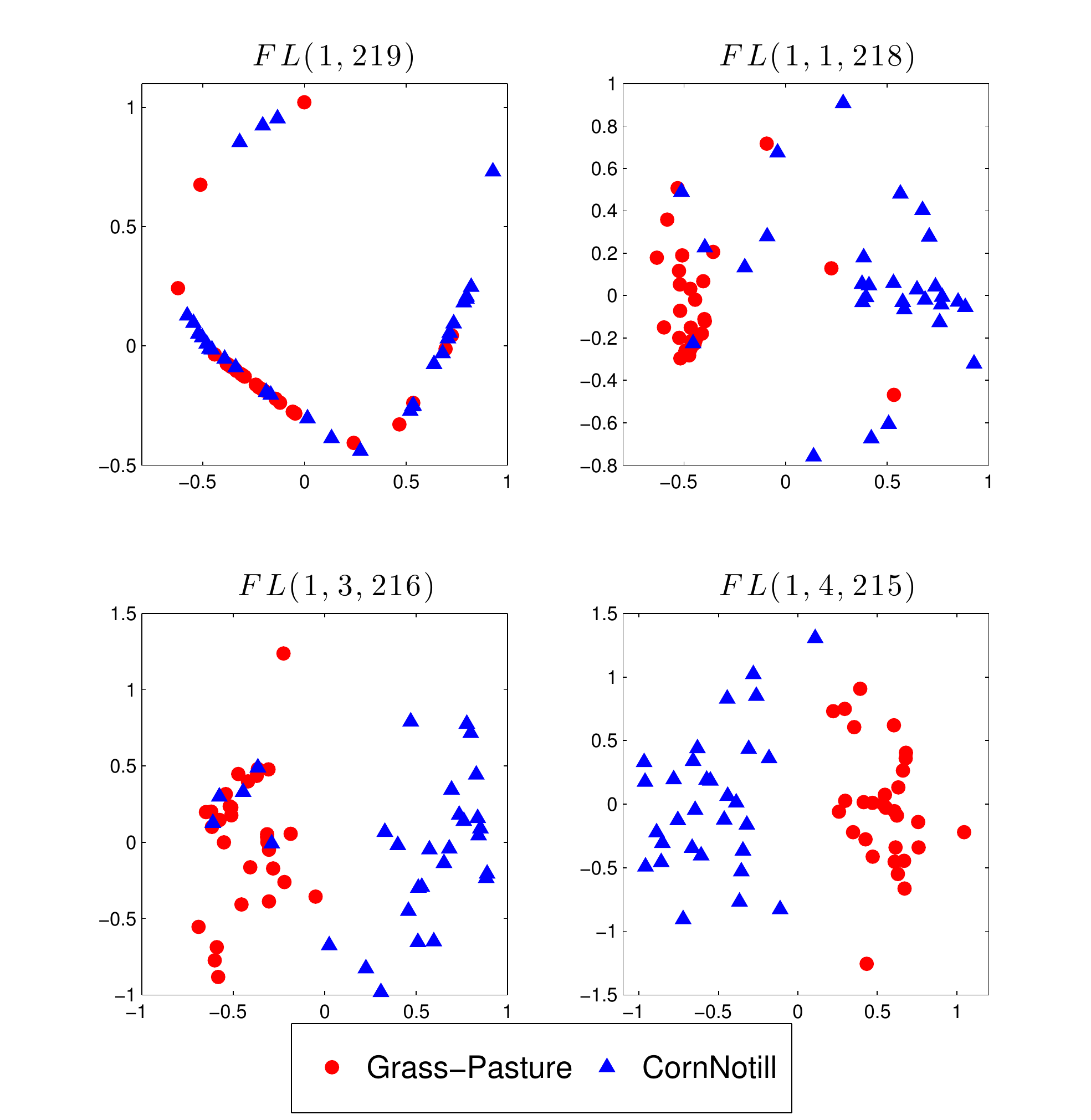}
\caption{Configuration of points on various flag manifolds embedded in Euclidean space.}
\label{fig:various_feature_dim}

\end{figure}

\begin{figure}[!htbp]
\centering
\includegraphics[width=7cm]{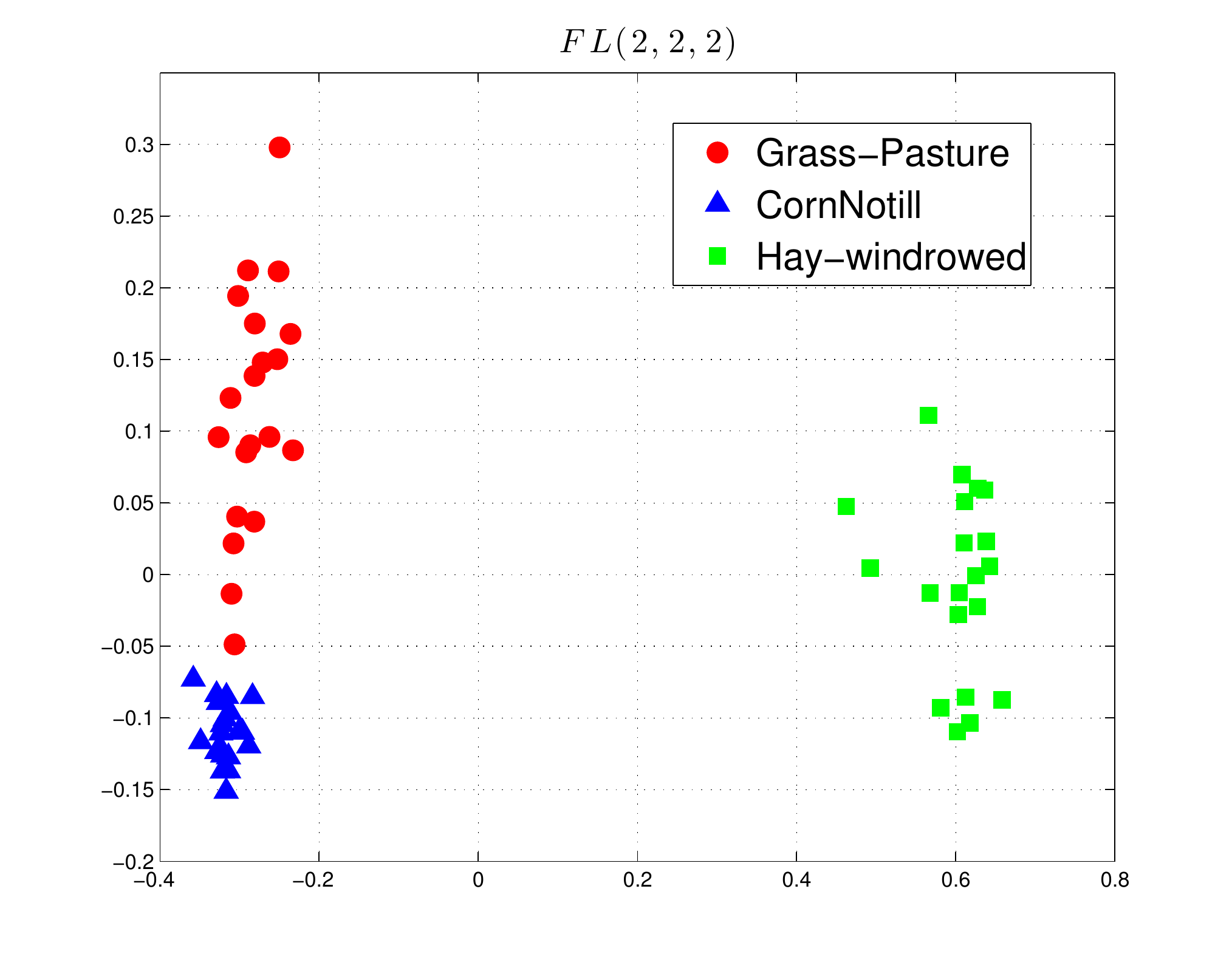}
\caption{Configuration of points on $FL(2,2,2)$ embedded in Euclidean space for 3 classes: Grass-Pasture',Corn-notill,Hay-windrowed. 6 bands(3,29,42,61,65,158) are selected so the ambient dimension $n=6$.}
\label{fig:3class}

\end{figure}


\section{Conclusion}
\label{conc}
We have proposed a geometric framework for comparing distances
between nested subspaces, i.e., points on a flag manifold.  This approach exploits a mathematical framework that enables the data analyst to gain insight
into the way the data resides in its ambient space, both in terms of dimension and distribution.
This approach is suitable for the analysis of wide data matrices, e.g., where the number 
of data features is less than the number of points and for data sets consisting  of a mixture of classes.

We have presented the theoretical foundation for computing
geodesic distances between two points
on a flag manifold.  The theory lends itself naturally
to numerical algorithms for computing the distance 
as well as the set of points along the shortest path between the two
points.  This formulation allows one to move a set
of nested subspaces into another set of nested 
subspaces along the  shortest path that respects
the intrinsic geometry.
These tools
provide a mechanism to leverage angles
between subspaces where the previous formalism
on the Grassmannian may fail.

The flag geodesic algorithms have been demonstrated on mixed MNIST data sets and on the Indian
Pines hyperspectral data set where the number
of hyperspectral features (each corresponding to a
frequency band) and 
flag structure  are varied.  In particular,
we focus on the transition from tall to wide matrices.
We see that the geodesic 
distance on the flag manifold is 
able to separate the data for visualization
in two dimensions while the Grassmannian framework
fails to do so.  

\section*{Acknowledgment}
This paper is based on research partially supported by the National Science Foundation under Grants No. NSF-1633830, 
NSF-1830676, and NSF-1712788.

{\small
\bibliographystyle{plain}
\bibliography{main}

\begin{thebibliography}{10}

\bibitem{absil}
P-A Absil, Robert Mahony, and Rodolphe Sepulchre.
\newblock {\em Optimization algorithms on matrix manifolds}.
\newblock Princeton University Press, 2009.

\bibitem{azary2013}
Sherif Azary and Andreas Savakis.
\newblock Grassmannian sparse representations and motion depth surfaces for 3d
  action recognition.
\newblock In {\em Proceedings of the IEEE Conference on Computer Vision and
  Pattern Recognition Workshops}, pages 492--499, 2013.

\bibitem{chakraborty2015recursive}
Rudrasis Chakraborty and Baba~C Vemuri.
\newblock Recursive {F}rechet mean computation on the {G}rassmannian and its
  applications to computer vision.
\newblock In {\em Proceedings of the IEEE International Conference on Computer
  Vision}, pages 4229--4237, 2015.

\bibitem{chang2008classification}
Jen-Mei Chang.
\newblock {\em Classification on the {G}rassmannians: theory and applications}.
\newblock Colorado State University, 2008.

\bibitem{conway1996}
John~H Conway, Ronald~H Hardin, and Neil~JA Sloane.
\newblock Packing lines, planes, etc.: Packings in {G}rassmannian spaces.
\newblock {\em Experimental mathematics}, 5(2):139--159, 1996.

\bibitem{daubechies}
Ingrid Daubechies.
\newblock {\em Ten Lectures on Wavelets}.
\newblock CBMS-NSF Regional Conference Series in Applied Mathematics. SIAM,
  Philadelphia, PA, 1992.

\bibitem{draper2014flag}
Bruce Draper, Michael Kirby, Justin Marks, Tim Marrinan, and Chris Peterson.
\newblock A flag representation for finite collections of subspaces of mixed
  dimensions.
\newblock {\em Linear Algebra and its Applications}, 451:15--32, 2014.

\bibitem{edelman1998geometry}
Alan Edelman, Tom{\'a}s~A Arias, and Steven~T Smith.
\newblock The geometry of algorithms with orthogonality constraints.
\newblock {\em SIAM journal on Matrix Analysis and Applications},
  20(2):303--353, 1998.

\bibitem{fiori2011extended}
Simone Fiori.
\newblock Extended {H}amiltonian learning on {R}iemannian manifolds:
  Theoretical aspects.
\newblock {\em IEEE transactions on neural networks}, 22(5):687--700, 2011.

\bibitem{gallivan2003efficient}
Kyle~A Gallivan, Anuj Srivastava, Xiuwen Liu, and Paul Van~Dooren.
\newblock Efficient algorithms for inferences on {G}rassmann manifolds.
\newblock In {\em IEEE Workshop on Statistical Signal Processing, 2003}, pages
  315--318. IEEE, 2003.

\bibitem{harandi2011graph}
Mehrtash~T Harandi, Conrad Sanderson, Sareh Shirazi, and Brian~C Lovell.
\newblock Graph embedding discriminant analysis on {G}rassmannian manifolds for
  improved image set matching.
\newblock In {\em CVPR 2011}, pages 2705--2712. IEEE, 2011.

\bibitem{he2012incremental}
Jun He, Laura Balzano, and Arthur Szlam.
\newblock Incremental gradient on the {G}rassmannian for online foreground and
  background separation in subsampled video.
\newblock In {\em 2012 IEEE Conference on Computer Vision and Pattern
  Recognition}, pages 1568--1575. IEEE, 2012.

\bibitem{hong2014geodesic}
Yi~Hong, Roland Kwitt, Nikhil Singh, Brad Davis, Nuno Vasconcelos, and Marc
  Niethammer.
\newblock Geodesic regression on the {G}rassmannian.
\newblock In {\em European Conference on Computer Vision}, pages 632--646.
  Springer, 2014.

\bibitem{huang2018building}
Zhiwu Huang, Jiqing Wu, and Luc Van~Gool.
\newblock Building deep networks on {G}rassmann manifolds.
\newblock In {\em Thirty-Second AAAI Conference on Artificial Intelligence},
  2018.

\bibitem{kumar2016robust}
Sriram Kumar and Andreas Savakis.
\newblock Robust domain adaptation on the l1-{G}rassmannian manifold.
\newblock In {\em Proceedings of the IEEE Conference on Computer Vision and
  Pattern Recognition Workshops}, pages 103--110, 2016.

\bibitem{kutyniok2009robust}
Gitta Kutyniok, Ali Pezeshki, Robert Calderbank, and Taotao Liu.
\newblock Robust dimension reduction, fusion frames, and {G}rassmannian
  packings.
\newblock {\em Applied and Computational Harmonic Analysis}, 26(1):64--76,
  2009.

\bibitem{liu2013partial}
Mengyi Liu, Ruiping Wang, Zhiwu Huang, Shiguang Shan, and Xilin Chen.
\newblock Partial least squares regression on {G}rassmannian manifold for
  emotion recognition.
\newblock In {\em Proceedings of the 15th ACM on International conference on
  multimodal interaction}, pages 525--530. ACM, 2013.

\bibitem{ma2011recognition}
Bei Ma and Hailin Zhang.
\newblock Recognition of faces using texture-based principal component analysis
  and {G}rassmannian distances analysis.
\newblock In {\em International Conference on Graphic and Image Processing
  (ICGIP 2011)}, volume 8285, page 82856C. International Society for Optics and
  Photonics, 2011.

\bibitem{ma2019self}
Xiaofeng Ma, Michael Kirby, and Chris Peterson.
\newblock Self-organizing mappings on the flag manifold.
\newblock In {\em International Workshop on Self-Organizing Maps}, pages
  13--22. Springer, 2019.

\bibitem{mallat_89b}
S.~Mallat.
\newblock Multiresolution approximations and wavelet orthonormal bases of
  ${L}^2({R})$.
\newblock {\em Trans. Amer. Math. Soc.}, 315:69--87, 1989.

\bibitem{mallat_89c}
S.~Mallat.
\newblock A theory of multiresolution signal decomposition: {t}he wavelet
  representation.
\newblock {\em IEEE Trans. Pattern Anal. Mach. Intell.}, 315:69--87, 1989.

\bibitem{marrinan2015flag}
Tim Marrinan, J~Ross Beveridge, Bruce Draper, Michael Kirby, and Chris
  Peterson.
\newblock Flag manifolds for the characterization of geometric structure in
  large data sets.
\newblock In {\em Numerical Mathematics and Advanced Applications-ENUMATH
  2013}, pages 457--465. Springer, 2015.

\bibitem{marrinan2016flag}
Timothy Marrinan, J~Ross Beveridge, Bruce Draper, Michael Kirby, and Chris
  Peterson.
\newblock Flag-based detection of weak gas signatures in long-wave infrared
  hyperspectral image sequences.
\newblock In {\em Algorithms and Technologies for Multispectral, Hyperspectral,
  and Ultraspectral Imagery XXII}, volume 9840, page 98401N. International
  Society for Optics and Photonics, 2016.

\bibitem{nishimori2007flag}
Yasunori Nishimori, Shotaro Akaho, Samer Abdallah, and Mark~D Plumbley.
\newblock Flag manifolds for subspace {ICA} problems.
\newblock In {\em 2007 IEEE International Conference on Acoustics, Speech and
  Signal Processing-ICASSP'07}, volume~4, pages IV--1417. IEEE, 2007.

\bibitem{nishimori2006riemannian2}
Yasunori Nishimori, Shotaro Akaho, and Mark~D Plumbley.
\newblock Riemannian optimization method on generalized flag manifolds for
  complex and subspace {ICA}.
\newblock In {\em AIP Conference Proceedings}, volume 872, pages 89--96.
  American Institute of Physics, 2006.

\bibitem{nishimori2006riemannian}
Yasunori Nishimori, Shotaro Akaho, and Mark~D Plumbley.
\newblock Riemannian optimization method on the flag manifold for independent
  subspace analysis.
\newblock In {\em International Conference on Independent Component Analysis
  and Signal Separation}, pages 295--302. Springer, 2006.

\bibitem{nishimori2008natural}
Yasunori Nishimori, Shotaro Akaho, and Mark~D Plumbley.
\newblock Natural conjugate gradient on complex flag manifolds for complex
  independent subspace analysis.
\newblock In {\em International Conference on Artificial Neural Networks},
  pages 165--174. Springer, 2008.

\bibitem{patel2015visual}
Vishal~M Patel, Raghuraman Gopalan, Ruonan Li, and Rama Chellappa.
\newblock Visual domain adaptation: A survey of recent advances.
\newblock {\em IEEE signal processing magazine}, 32(3):53--69, 2015.

\bibitem{shaw2013regression}
David~A Shaw and Rama Chellappa.
\newblock Regression on manifolds using data-dependent regularization with
  applications in computer vision.
\newblock {\em Statistical Analysis and Data Mining: The ASA Data Science
  Journal}, 6(6):519--528, 2013.

\bibitem{strohmer2003}
Thomas Strohmer and Robert~W Heath~Jr.
\newblock Grassmannian frames with applications to coding and communication.
\newblock {\em Applied and computational harmonic analysis}, 14(3):257--275,
  2003.

\bibitem{taheri2011towards}
Sima Taheri, Pavan Turaga, and Rama Chellappa.
\newblock Towards view-invariant expression analysis using analytic shape
  manifolds.
\newblock In {\em Face and Gesture 2011}, pages 306--313. IEEE, 2011.

\bibitem{turaga2009locally}
Pavan Turaga and Rama Chellappa.
\newblock Locally time-invariant models of human activities using trajectories
  on the {G}rassmannian.
\newblock In {\em 2009 IEEE Conference on Computer Vision and Pattern
  Recognition}, pages 2435--2441. IEEE, 2009.

\bibitem{wang2009kernel}
Tiesheng Wang and Pengfei Shi.
\newblock Kernel {G}rassmannian distances and discriminant analysis for face
  recognition from image sets.
\newblock {\em Pattern Recognition Letters}, 30(13):1161--1165, 2009.

\bibitem{wang2013grassmannian}
Xinchao Wang, Wei Bian, and Dacheng Tao.
\newblock Grassmannian regularized structured multi-view embedding for image
  classification.
\newblock {\em IEEE Transactions on Image Processing}, 22(7):2646--2660, 2013.

\bibitem{2019arXiv190700949Y}
Ke~{Ye}, Ken {Sze-Wai Wong}, and Lek-Heng {Lim}.
\newblock {Optimization on flag manifolds}.
\newblock {\em arXiv e-prints}, page arXiv:1907.00949, Jul 2019.

\end{thebibliography}
}

\end{document}